\documentclass{article}
\PassOptionsToPackage{nonamebreak}{natbib}
\usepackage{iclr2027_conference,times}

\usepackage{amsmath,amsfonts,bm}

\def\eqref#1{equation~\ref{#1}}

\def\1{\bm{1}}

\DeclareMathAlphabet{\mathsfit}{\encodingdefault}{\sfdefault}{m}{sl}
\SetMathAlphabet{\mathsfit}{bold}{\encodingdefault}{\sfdefault}{bx}{n}

\usepackage{graphicx}
\usepackage{booktabs}
\usepackage{array}
\usepackage{amssymb}
\usepackage{amsthm}
\usepackage{algorithm}
\usepackage{algpseudocode}
\usepackage{xurl}
\usepackage{xcolor}
\usepackage{colortbl}
\usepackage{float}
\usepackage{placeins}
\usepackage{needspace}
\usepackage{hyperref}
\usepackage{url}
\definecolor{citationgreen}{RGB}{0,255,0}
\hypersetup{
  colorlinks=false,
  pdfborder={0 0 1},
  citebordercolor=citationgreen,
  linkbordercolor=red,
  urlbordercolor=white,
  filebordercolor=white,
  pdftitle={L2G-SR: Learning to Guide LLM-based Symbolic Regression}
}

\title{L2G-SR: Learning to Guide\\LLM-based Symbolic Regression}
\iclrfinalcopy
\author{Letian Huang \quad Xiangyu Wang \quad Haowei Lin \quad Di He\\
Peking University\\
{\small\normalfont \href{mailto:lotte_steven@stu.pku.edu.cn}{\texttt{\detokenize{lotte_steven@stu.pku.edu.cn}}}\quad \href{mailto:di_he@pku.edu.cn}{\texttt{\detokenize{di_he@pku.edu.cn}}}}}
\hypersetup{pdfauthor={Letian Huang, Xiangyu Wang, Haowei Lin, Di He}}

\hypersetup{hidelinks}
\newcommand{\dstr}{d_{\mathrm{str}}}
\newcommand{\vpred}{V_\theta}
\newcommand{\acc}{\mathrm{Acc}_{0.01}}
\newcommand{\nmse}{\mathrm{NMSE}}
\newcommand{\mse}{\mathrm{MSE}}
\newcommand{\ours}{L2G-SR}
\newtheorem{proposition}{Proposition}
\graphicspath{{figures/}}

\begin{document}
\maketitle
\fancyhead{}
\begin{abstract}
Symbolic regression searches for an equation that explains observed data. Numerical error is commonly used to guide the search, but it alone can favor an accurate surrogate over a candidate with the right structure. We propose \ours{}, a framework for LLM-based symbolic regression that guides search with a learned structural verifier. We construct structural supervision signal using tree edit distance (TED), accounting for commutative operators and alternative expression forms. A verifier first aligns numerical observations with the formulas, then learns to score a candidate's structural similarity to the target formula. At search time, the target formula is hidden and the verifier complements numerical error in parent selection. An analyst then uses verifier feedback and diagnostic tools to propose revisions, which a generator converts into refined expressions. On LLM-SRBench, \ours{} achieves state-of-the-art numerical fit, equation level accuracy and TED across the evaluated LLM backbones and domains. \ours{} demonstrates the value of learned structural guidance for LLM-based symbolic regression.
\end{abstract}

\section{Introduction}
\label{sec:intro}
Symbolic regression discovers both the structure and constants of an expression that explains observed variables, making it useful for interpretable modeling and the discovery of candidate scientific laws \citep{schmidt,pysr}. However, finding an optimal expression is NP-hard in general \citep{srhard}, and the combinatorial growth of possible structures makes efficient exploration essential. Existing approaches include search-based methods that evolve candidate expressions \citep{koza,pysr}, neural methods that learn to map observations to symbolic expressions \citep{biggio,e2e}, and methods based on large language models (LLMs) that combine scientific priors with numerical feedback to propose and refine equations \citep{llmsr,lasr,drsr,de,restart}.

Despite these advances, existing methods still rely heavily on numerical fit to guide search and evaluate recovered formulas using metrics such as normalized mean squared error (NMSE) and accuracy at a 1\% relative-error tolerance ($\acc$). It is essential feedback, but it need not identify the most useful structure to revise: near the origin, $x-x^3/6$ approximates $\sin x$ despite using different operators, while $\exp(1.05x)$ shares the form of $\exp(x)$ yet can accumulate substantial error. Numerical metrics alone are therefore insufficient to reliably guide formula discovery, motivating a complementary structural score.

To address this gap, we introduce \ours{} through three connected components, which guides LLM-based SR with a learned structural verifier (Figure~\ref{fig:overview}). A \emph{modified tree edit distance} (TED) abstracts constants, matches commutative operands, and compares selected algebraic views to label structural similarity. A \emph{learned verifier} first aligns observations with their generating formulas, then learns to predict a candidate's structural similarity. A \emph{two-signal search} combines verifier output and numerical error to select parents, suggest edits, and retain candidates.

On 240 LLM-SRBench equations, \ours{} achieves the highest or the second highest performance on all metrics and LLM backbones, showing the strong performance of our method. Replacing its structural channel with validation MSE substantially lowers the performance.

\newpage 
\begin{figure}[H]
    \centering
    \includegraphics[width=\linewidth]{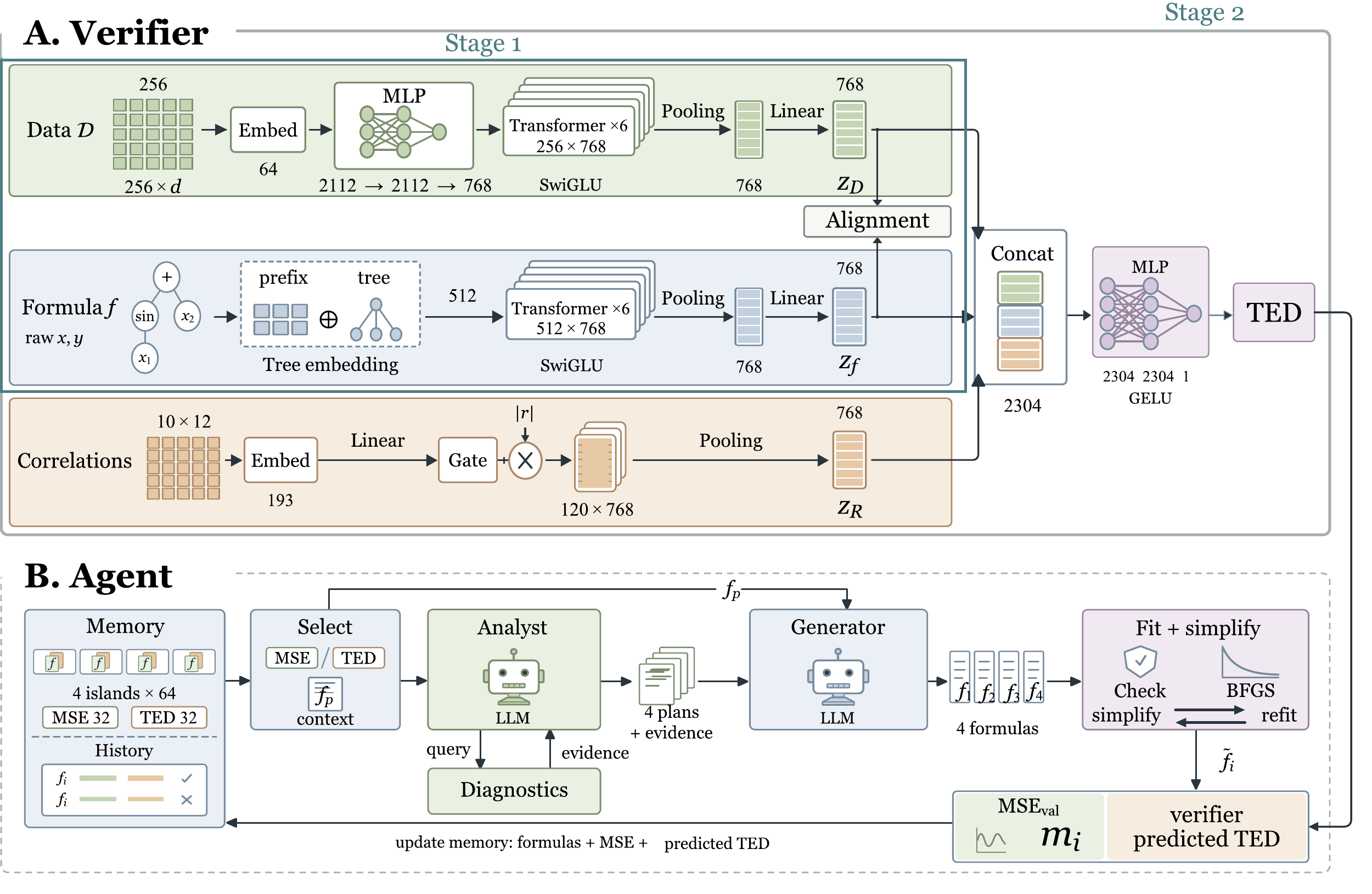}
    \caption{Overview of \ours{}. (A) A verifier learns TED with two-stage training. (B) During search, numerical error and predicted TED guide candidate selection and retention.}
    \label{fig:overview}
\end{figure}

Our contributions are threefold:
\begin{itemize}
    \item We propose a modified TED that measures structural similarity between formulas while accommodating common simplifications and equivalent forms, providing structural supervision beyond numerical fit alone.
    \item We develop a multimodal TED verifier that predicts a candidate's structural distance to an unseen target formula from the candidate and its observations, making structural feedback available during symbolic regression.
    \item We integrate this learned feedback into an LLM-guided search agent, enabling it to prioritize and revise formulas that are structurally promising as well as numerically accurate.
\end{itemize}

\section{Related work}
\label{sec:related}

\paragraph{Symbolic and neural search.}
Genetic programming, Eureqa, Operon, PySR, GP-GOMEA, AI Feynman, and Bayesian SR search expressions through evolution, physical structure, or probabilistic modeling~\citep{koza,schmidt,operon,pysr,gpgomea,aifeynman,aifeynman2,bsr}; SRBench compares representative methods~\citep{srbench}. DSR and uDSR learn generation policies, while neural-guided GP, DGSR-MCTS, and TPSR combine learned proposals with search~\citep{dsr,unifieddsr,mundhenk,dgsrmcts,tpsr}. Pretrained and end-to-end models amortize discovery over synthetic corpora~\citep{biggio,e2e,dgsr}. SNIP, MDLformer, and MOD-SR connect numerical and symbolic representations~\citep{snip,mdlformer,modsr}. We use such representations to score a candidate's structural similarity for an external search procedure.

\paragraph{LLM-guided discovery and structural comparison.}
LLM-SR, LaSR, and DrSR use language-model priors to refine expressions~\citep{llmsr,lasr,drsr}. DE, RESTART, SR-Scientist, and related systems add diagnostics, reusable structures, multiple roles, or engine control~\citep{de,restart,srscientist,asr,guided,llmpysr,sga}. Our distinction is the supervised structural signal in selection, diagnosis, and retention. TED compares labeled trees~\citep{ted}; SRSD and PHYBench evaluate recovered structure, and EGG-SR represents equivalences with equality graphs~\citep{srsd,phybench,eggsr}.

\section{Method}
\label{sec:method}
\subsection{Problem formulation}
\label{sec:problem}

Given observations $\mathcal D=\{(x_i,y_i)\}_{i=1}^n$, where $x_i\in\mathbb R^p$ and $y_i=f^\star(x_i)+\epsilon_i$, symbolic regression seeks a compact expression $f$ for the unknown target $f^\star$. The admissible expression space $\mathcal F$ is defined by the available variables, operators, and constants; each candidate has a symbolic structure and fitted numerical constants. Numerical fit is commonly measured by mean squared error,
\begingroup
\setlength{\abovedisplayskip}{3pt}
\setlength{\belowdisplayskip}{3pt}
\setlength{\abovedisplayshortskip}{2pt}
\setlength{\belowdisplayshortskip}{2pt}
\begin{equation}
    \mathcal L_{\mse}(f;\mathcal D)=\frac{1}{n}\sum_{i=1}^{n}\bigl(f(x_i)-y_i\bigr)^2.
    \label{eq:sr_mse}
\end{equation}
\endgroup
Finite observations can support expressions with different structures, so low empirical error alone does not establish similarity to $f^\star$. We therefore seek candidates with good numerical fit while estimating their structural similarity to the target from $\mathcal D$ and the candidate expression. The target expression remains hidden during search.

\subsection{Structural similarity measure}
\label{sec:distance}
We use a modified tree edit distance (TED), $\dstr(f,g)$, to compare formula structures despite fitted constants, operand order, and common algebraic rewrites; smaller values indicate closer structures.

\noindent\textbf{Step 1: Abstract trees and edit costs.}
To ignore constant values and grouping, we map numeric-only subtrees to \texttt{number}, symbolic constants to \texttt{param}, and flatten Add/Mul nodes. Insertions, deletions, and other relabelings cost one; identical labels and \texttt{number}/\texttt{param} exchanges cost zero \citep{ted}. With PHYBench's discount \citep{phybench}, insertion or deletion of a subtree of size $s$ costs

\begin{equation}
    \kappa(s)=\min\{s,\;0.6(s-5)+5\}.
    \label{eq:subtree}
\end{equation}

\noindent\textbf{Step 2: Match commutative operands.}
Let $D_{\mathrm{ord}}(A,B)$ be ordered TED under these costs. At addition or multiplication roots, we also match children by minimum-cost assignment to reduce sensitivity to operand order:
\begin{equation}
 D_{\mathrm{hyb}}(A,B)=
 \begin{cases}
 \min\!\left\{D_{\mathrm{ord}}(A,B),\,
 c(a,b)+\min_{\pi}\sum_{i=1}^{m} C_{i,\pi(i)}\right\},
 &a,b\in\{\mathrm{Add},\mathrm{Mul}\},\\
 D_{\mathrm{ord}}(A,B),&\text{otherwise}.
 \end{cases}
 \label{eq:hybrid}
\end{equation}
Here $a,b$ are root labels, $c$ is their relabeling cost, and $m$ is the larger child count. $C_{ij}$ is the recursive hybrid cost for real children or $\kappa(s)$ for a dummy match. We retain the cheaper of ordered editing and child assignment.

\noindent\textbf{Step 3: Compare algebraic views.}
We compare three views to tolerate common rewrites: a structure-preserving candidate parse (evaluated reference), ordinary evaluation with constant absorption, and expansion with power simplification. Let $P_k^c,P_k^r$ transform the candidate and reference, and $T$ build the tree:
\begin{equation}
 \dstr(f,g)=
 \min_{k\in\{0,1,2\}}
 \frac{D_{\mathrm{hyb}}\!\left(T(P_k^c(f)),T(P_k^r(g))\right)}
 {|T(P_k^r(g))|}.
 \label{eq:target}
\end{equation}
Each cost is relative to the reference-tree size. Appendix~\ref{app:distance} specifies the transformations.

Figure~\ref{fig:abstract_tree_invariances} illustrates the shared abstract representation and the distinctions retained by this measure. More examples and explanations are provided in Appendix~\ref{app:distance}.

\begin{figure}[H]
    \centering
    \includegraphics[width=\linewidth]{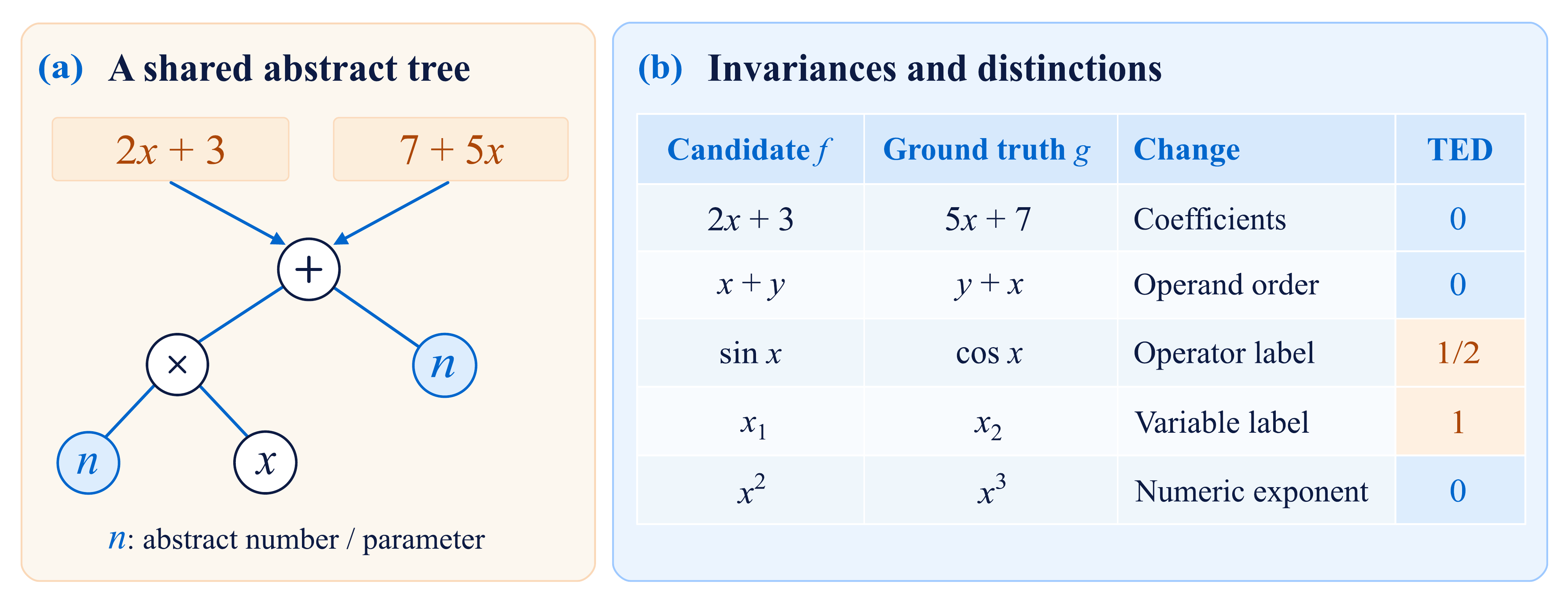}
    \caption{Examples of structural similarity under the modified TED. (a) The two linear expressions map to the same abstract tree. (b) Scores are normalized by the ground-truth tree size. In these examples, changes in coefficient values, operand order, and numeric exponent values yield zero distance, whereas operator and variable label changes incur a cost.}
    \label{fig:abstract_tree_invariances}
\end{figure}

\subsection{Data generation}
\label{sec:data_generation}

Training our Transformer verifier requires a large volume of paired numerical and symbolic data. To produce diverse synthetic formulas that resemble scientific equations and search-time expressions, we use a fixed data-generation scheme informed by SRBench/Feynman ground-truth formulas and recorded LLM-SR/PySR search trajectories on SRBench. These sources provide realistic structural priors, while all training instances are independently generated by our own synthesis algorithm, without access to LLMSRbench. The resulting data pools contain 91.9 million target--candidate pairs. To prevent benchmark overlap, we remove training pairs that share the same symbolic structure with any LLM-SRBench ground-truth expression after constant abstraction and symbolic canonicalization. More details are provided in Appendix~\ref{app:generator}.

\noindent\textbf{Random formulas.}
We first sample the total number of nodes and operators from empirical priors. We then sample an ordered tree shape using Catalan weights and place the operators, variables, and constants. These priors balance ground-truth and LLM-SR search formulas, preserving common structures while generating diverse new expressions.

\noindent\textbf{Target--candidate pairs.}
We construct two types of target--candidate pairs: random and edited. Random pairs draw a candidate $f$ independently of the target $f^\star$; edited pairs obtain $f$ by locally modifying $f^\star$, including point relabeling, unary insertion or deletion, subtree replacement, etc. Random pairs mainly generate distant pairs (TED $\geq 0.6$), whereas edited pairs mainly generate close pairs (TED $<0.6$). Our aim is to cover as broad a range of TED values as possible. To reduce bias from the uneven TED distribution of generated pairs, we sample 12 TED intervals equally.

\noindent\textbf{Simplification and validation.}
We fold constant-only subtrees and apply selected algebraic simplifications. We reject scientifically implausible formulas, such as those with atypical operator nesting or invalid domains. This filtering aims to keep synthetic formulas scientifically plausible and closer to the structural patterns of LLM-generated search expressions.

\noindent\textbf{Numerical observations.}
We adapt the observation sampler of MDLformer \citep{mdlformer} to generate observation points that cover diverse regions of the input space. For each synthetic target $f^\star$, we draw inputs from a mixture of randomly located and rotated Gaussian or uniform components. The resulting observation set is
\begin{equation}
 \begin{aligned}
 \mathcal D^\star&=\{(\mathbf x_i,f^\star(\mathbf x_i))\}_{i=1}^{M},
 &\mathbf x_i&\sim\sum_{k=1}^{C}\rho_k Q_k,
 &Q_k&\in\{Q_k^{\mathrm G},Q_k^{\mathrm U}\},\\
 Q_k^{\mathrm G}&=\operatorname{Law}\!\left((\boldsymbol\mu_k+\boldsymbol\epsilon_k)R_k\right),
 &\boldsymbol\epsilon_k&\sim\mathcal N\!\left(\mathbf 0,\operatorname{diag}(\mathbf c_k)\right),\\
 Q_k^{\mathrm U}&=\operatorname{Law}\!\left((\boldsymbol\mu_k+\sqrt{\mathbf c_k}\odot\mathbf u_k)R_k\right),
 &\mathbf u_k&\sim\mathcal U([-1,1]^d).
 \end{aligned}
 \label{eq:generated_observations}
\end{equation}
Here, $R_k$ is a random orthogonal rotation; before rotation, the mean is $\boldsymbol\mu_k$ and coordinatewise variances are $\mathbf c_k$ (Gaussian) or $\mathbf c_k/3$ (uniform).

\subsection{Verifier}
\label{sec:learning}
\noindent\textbf{Architecture.}

The verifier first aligns numerical observations $\mathcal D$ with their generating formulas in a shared embedding space. It then combines the observation embedding with a candidate formula embedding and regression summaries $R(\mathcal D)$ to estimate the candidate's TED to the unknown target:
\begin{equation}
 z_D=E_D(\mathcal D),\qquad
 z_f=E_f(f),\qquad
 z_R=E_R(R(\mathcal D)),\qquad
 \widehat d=h_\theta([z_D;z_f;z_R]).
 \label{eq:architecture}
\end{equation}
\noindent\textbf{Embedding.}
Following MDLformer \citep{mdlformer}, we encode each input and target value in scientific notation using three tokens: its sign (positive or negative), its mantissa (the coefficient rounded to four significant digits), and its base-10 exponent (which specifies the power of ten multiplying the coefficient); for example, $54.321\approx 5.432\times10^{1}\quad\longmapsto\quad[+,\,5.432,\,+1]$.
As in MDLformer \citep{mdlformer}, we tokenize $f$ in prefix order. However, TED prediction relies on the expression's tree structure, while prefix order does not explicitly mark each token's depth or branch. We therefore add a tree address to each token \citep{mathgpt}. For a root-to-node path with child slots $(c_1,\ldots,c_d)$, its address contribution is
\begin{equation}
 a=W_A\frac{\sum_{\ell=1}^{d}e_{\mathrm{level}}(\ell)\odot e_{\mathrm{slot}}(c_\ell)}{\sqrt{\max(d,1)}}+e_{\mathrm{depth}}(d).
 \label{eq:address}
\end{equation}
At each depth, the level and child-slot embeddings together record the branch taken; adding these records describes the path from root to node, scaling limits the effect of path length, and $W_A$ places the result in the token-embedding space while $e_{\mathrm{depth}}(d)$ marks its depth.

\noindent\textbf{Regression summaries.}
For each variable, we evaluate 12 simple relations, including $\log y\sim\log x_j$, $y\sim x_j$, and $y\sim x_j^2$ (Appendix~\ref{app:training}). Let $r$ be the Pearson correlation between the transformed input and target values. Each valid relation is encoded as
\[
\mathbf t=|r|\,\operatorname{Linear}([\mathbf e_{\mathrm{relation}};r]).
\]
Here $\mathbf e_{\mathrm{relation}}$ joins three learned vectors that specify whether the output is $y$ or $\log y$, the operation on the input (such as $x$, $x^2$, or $\log x$), and which input variable $x_j$ is used. The $|r|$ factor downweights weak correlations. The log--log relation probes how input and output magnitudes scale together; the other 11 probe different input--output relations, such as linear and quadratic patterns.

\noindent\textbf{Two-Stage Training.}

\noindent\textbf{Stage 1: symbolic--numeric alignment.}
Following MDLformer \citep{mdlformer}, which builds on SNIP \citep{snip}, we use symmetric in-batch contrastive learning to align $u_i=E_D(\mathcal D_i)$ with $v_i=E_f(f_i^\star)$, so observations and their generating formulas have consistent embeddings:
\begin{equation}
 \mathcal L_{\mathrm{align}}=-\frac{1}{2B}\sum_{i=1}^{B}
 \left[\log\frac{\exp(u_i^\top v_i/t)}{\sum_{j=1}^{B}\exp(u_i^\top v_j/t)}
 +\log\frac{\exp(v_i^\top u_i/t)}{\sum_{j=1}^{B}\exp(v_i^\top u_j/t)}\right],
 \label{eq:align_loss}
\end{equation}
\noindent\textbf{Stage 2: structural prediction.}
Starting from the Stage~1 encoders, we train the verifier to predict normalized TED $d=\dstr(f,f^\star)$ for observation--candidate pairs using
\begin{equation}
 \mathcal L(\widehat d,d)=
 \begin{cases}
 |\widehat d-d|,& d\leq 1,\\
 \max(0,1-\widehat d),& d>1.
 \end{cases}
 \label{eq:loss}
\end{equation}
For $d>1$, candidates are already structurally far from the target, so we only require $\widehat d\geq1$. This limits the effect of large distances and helps keep predictions from clustering near the mean. Appendix~\ref{app:training} gives the training configuration and checkpoint selection.

\subsection{L2G-SR: TED-guided symbolic search}
\label{sec:search}
We present L2G-SR, a symbolic regression method that finds formulas by repeatedly proposing and testing candidates \citep{pysr,llmsr}. Its core principle is to guide the search with two complementary signals: numerical fit to the observations and predicted TED to the unknown target formula. An LLM analyst suggests revisions, and an LLM generator writes new formulas. Further details are provided in Appendix~\ref{app:search}.

We compute validation MSE on a fixed 20\% holdout of the search observations to measure each candidate's predictive accuracy beyond the fitting subset.

We begin by generating formulas for four islands, each a separate candidate pool. In each search round, we select an island and a parent formula using either validation MSE or predicted TED. The analyst examines the parent and suggests four distinct ways to revise it; the generator then turns these suggestions into four complete formulas. We check each formula, fit its constants, simplify it, and score it with both signals before updating the selected island. After the search, we choose a simple formula among those with the best numerical fit. Algorithm~\ref{alg:search} summarizes the loop.

\noindent\textbf{Dual-score selection.}
Each round considers one of the four islands. We randomly draw a few formulas from that island, then choose one to revise (the parent): half the rounds take the formula with the lowest validation MSE, and the other half take the one with the lowest predicted TED. If $\mathcal S$ denotes the formulas drawn in the current round, then
\begingroup
\setlength{\abovedisplayskip}{3pt}
\setlength{\belowdisplayskip}{3pt}
\setlength{\abovedisplayshortskip}{2pt}
\setlength{\belowdisplayshortskip}{2pt}
\begin{equation}
 f_{\mathrm{parent}}=\begin{cases}
 \arg\min_{f\in\mathcal S}\mse_{\mathrm{val}}(f), & \text{MSE round},\\
 \arg\min_{f\in\mathcal S}\widehat d(f), & \text{TED round}.
 \end{cases}
\end{equation}
\endgroup

\noindent\textbf{Analyst agent.}
The analyst receives validation MSE and predicted TED for the parent and several retained formulas, plus summaries of the parent's residuals and transformed-variable regressions. In TED rounds, it compares lower-TED formulas with the parent to spot terms or operators worth testing. In MSE rounds, it focuses on patterns in the residual $y-f(x)$. To check a specific idea, it can name a reference formula in a structure-comparison request or a parent subtree in a residual-probe request. The first tool reports which parts differ; the second tests whether the selected subtree explains the residual on training data. Using these results and past edit outcomes, the analyst gives the generator four specific suggestions about what to change or keep.

\noindent\textbf{Generator agent.}
A second LLM, the generator, uses the parent, the analyst's suggestions, and cited evidence to produce four complete formulas. Each formula occupies one proposal slot and is checked, fitted, and scored before it can be kept for later rounds. The agent is told the physical units and restrictions needed for formulas to remain real-valued.

\noindent\textbf{Fitting and simplification.}
For each generated formula, we keep its symbolic structure fixed and use BFGS to optimize any unknown constants on the fitting subset.

We use SymPy to simplify algebraically equivalent expressions and combine redundant constants. We also remove terms that contribute little to the formula's predictions, as defined by Equation~\ref{eq:contribution} in Appendix~\ref{app:search}. Such terms create spurious branches in the formula tree, which later revisions may inherit or modify and thus steer the search away from the useful structure. They can also make the verifier predict an excessively large TED for an otherwise promising formula. We replace exponents that are already close to simple values. We optimize the remaining constants again with BFGS after removing terms or replacing exponents. The verifier scores only the final accepted formula.

\noindent\textbf{Memory.}
We keep four islands, each with at most 64 formulas, plus a separate history of proposals and outcomes for the analyst. After each round, valid formulas enter the selected island, and we compute each member's island-local rank by validation MSE and by predicted TED, with lower scores ranked first in both. When formulas share the same structure, only the version with lower validation MSE enters these rankings. If an island exceeds 64 formulas, we prune it using the two rankings. We keep up to 32 leading formulas from each ranking. The survivors can become parents in later rounds; final selection favors simple formulas among those with the best validation fit.

\section{Experiments}
\label{sec:experiments}

\begingroup
\raggedbottom
\makeatletter
\def\subsection{\@startsection{subsection}{2}{\z@}{-0.8ex}{0.8ex}{\normalsize\sc\raggedright}}
\makeatother

\subsection{Experimental setup}
\label{sec:experimental_setup}
\noindent\textbf{Benchmark.}
We evaluate on LLM-SRBench \citep{llmsrbench}, a symbolic regression benchmark designed to test discovery beyond memorized canonical formulas. Our evaluated configuration contains 240 equations across five domains: 111 Transform, 44 Physics, 25 Materials, 36 Chemistry, and 24 Biology.

\noindent\textbf{Baselines.}
We compare \ours{} with representative LLM-based symbolic regression methods, including LLMDirect \citep{llmsrbench}, LLM-SR \citep{llmsr}, LaSR \citep{lasr}, SGA \citep{sga}, DE \citep{de}, A-SR \citep{asr}, RESTART \citep{restart}, and SR-Scientist \citep{srscientist}.

\noindent\textbf{Metrics.}
\label{sec:evaluation_metrics}
We report normalized mean squared error (NMSE; lower is better), accuracy at a relative-error tolerance of 0.01 ($\acc$; higher is better), and true normalized tree edit distance (TED; lower is better) between a selected formula and the target expression. Following DE's evaluation protocol \citep{de}, we score test predictions in float64 and discard 5\% of the observations with the largest relative errors for each equation. For the retained set $I$ and its target mean $\bar y_I$, the numerical metrics are
\begingroup
\setlength{\abovedisplayskip}{3pt}
\setlength{\belowdisplayskip}{3pt}
\setlength{\abovedisplayshortskip}{3pt}
\setlength{\belowdisplayshortskip}{3pt}
\begin{equation}
 \nmse(f)=
 \frac{\sum_{i\in I}(f(x_i)-y_i)^2}
 {\sum_{i\in I}(y_i-\bar y_I)^2},\qquad
 A(f)=\mathbf1\!\left[
 \max_{i\in I}\frac{|f(x_i)-y_i|}{\max(|y_i|,10^{-12})}\leq0.01
 \right].
 \label{eq:metrics}
\end{equation}
\endgroup
We average equation-level NMSE, $A(f)$, and TED within each domain.

\subsection{Main experimental results}
\label{sec:main_results}

\begingroup
\setlength{\intextsep}{4pt}
\setlength{\abovecaptionskip}{4pt}
\begin{table}[H]
\caption{Performance on five LLM-SRBench domains with Llama3.1-8B-Instruct and Qwen3-4B-Instruct-2507. Columns report NMSE ($\downarrow$) and $\acc$ ($\uparrow$, \%). LLMDirect, LLM-SR, LaSR, SGA, and DE results are from DE \citep{de}; A-SR results are from \citet{asr}. \ours{} shows mean $\pm$ sample standard deviation over three seeds. Bold and underlined means mark the best and second-best displayed values.}
\label{tab:main}
\centering
\scriptsize
\setlength{\tabcolsep}{1.5pt}
\renewcommand{\arraystretch}{1.15}
\newcommand{\tablemeanstd}[2]{\begin{tabular}[t]{@{}c@{}}\strut#1\\{\tiny\strut\textpm#2}\end{tabular}}
\resizebox{\linewidth}{!}{%
\begin{tabular}{@{}l*{5}{cc}@{}}
\toprule
 & \multicolumn{2}{c}{\textbf{Transform}} & \multicolumn{2}{c}{\textbf{Physics}} & \multicolumn{2}{c}{\textbf{Materials}} & \multicolumn{2}{c}{\textbf{Chemistry}} & \multicolumn{2}{c}{\textbf{Biology}}\\
\cmidrule(lr){2-3}\cmidrule(lr){4-5}\cmidrule(lr){6-7}\cmidrule(lr){8-9}\cmidrule(lr){10-11}
\textbf{Method} & NMSE $\downarrow$ & $\acc$ $\uparrow$ & NMSE $\downarrow$ & $\acc$ $\uparrow$ & NMSE $\downarrow$ & $\acc$ $\uparrow$ & NMSE $\downarrow$ & $\acc$ $\uparrow$ & NMSE $\downarrow$ & $\acc$ $\uparrow$\\
\midrule
\rowcolor{gray!10}
\multicolumn{11}{c}{\textbf{Llama3.1-8B-Instruct}}\\
LLMDirect & 2.95e-1 & 34.23 & 9.95e-3 & 0.00 & 8.16e-2 & 24.00 & 1.22e0 & 2.78 & 1.29e-1 & 4.17\\
LLM-SR & 2.42e-1 & 34.23 & 3.00e-3 & 6.82 & 2.16e-1 & 60.00 & 5.24e-2 & 16.67 & 1.76e-2 & 12.50\\
LaSR & 2.62e-1 & 33.33 & 6.07e-3 & 9.09 & 9.47e-4 & 32.00 & 1.82e-3 & 8.33 & 6.40e-3 & 0.00\\
SGA & 3.52e-1 & 0.90 & 1.55e-1 & 2.27 & 4.35e-2 & 12.00 & 4.58e-2 & 8.33 & 2.42e-1 & 0.00\\
DE & \underline{1.12e-1} & 36.04 & 1.01e-3 & 11.36 & 2.89e-4 & 64.00 & 4.16e-4 & 11.11 & 1.17e-2 & 16.67\\
A-SR & \textbf{\boldmath 1.09e-1} & \underline{40.54} & \underline{9.03e-4} & \underline{27.27} & \underline{7.45e-6} & \underline{84.00} & 2.60e-5 & 31.94 & \textbf{\boldmath 5.38e-6} & \underline{50.00}\\
RESTART & 2.22e-1 & 4.80 & 7.48e-3 & 6.06 & 8.19e-5 & 66.67 & \underline{2.24e-6} & 60.19 & 1.80e-2 & 22.22\\
SR-Scientist & 3.03e-1 & 3.00 & 3.78e-3 & 23.48 & 1.77e-3 & 52.00 & 4.96e-5 & \underline{63.89} & 7.26e-2 & 20.83\\
\rowcolor{blue!4}
\ours & \tablemeanstd{\underline{1.12e-1}}{0.04e-1} & \tablemeanstd{\textbf{\boldmath 43.54}}{1.04} & \tablemeanstd{\textbf{\boldmath 2.70e-4}}{1.11e-4} & \tablemeanstd{\textbf{\boldmath 89.39}}{3.47} & \tablemeanstd{\textbf{\boldmath 1.07e-7}}{0.23e-7} & \tablemeanstd{\textbf{\boldmath 100.00}}{0.00} & \tablemeanstd{\textbf{\boldmath 1.23e-6}}{0.67e-6} & \tablemeanstd{\textbf{\boldmath 85.05}}{1.50} & \tablemeanstd{\underline{6.06e-3}}{4.05e-3} & \tablemeanstd{\textbf{\boldmath 68.06}}{6.36}\\
\midrule
\rowcolor{gray!10}
\multicolumn{11}{c}{\textbf{Qwen3-4B-Instruct-2507}}\\
LLMDirect & 3.55e-1 & 24.32 & 5.46e-2 & 6.82 & 1.42e-3 & 52.00 & 2.66e-1 & 2.78 & 4.46e-2 & 8.33\\
LLM-SR & 3.15e-1 & 26.13 & 2.51e-3 & 6.82 & 3.55e-3 & 44.00 & 3.36e-2 & 13.89 & 1.88e-2 & 12.50\\
LaSR & 1.83e-1 & 30.91 & 6.04e-3 & 6.82 & 6.21e-4 & 8.00 & 2.31e-3 & 2.78 & 9.56e-3 & 0.00\\
SGA & 4.09e-1 & 19.81 & 1.04e-1 & 0.00 & 1.02e-2 & 16.00 & 1.61e-1 & 2.78 & 1.73e-1 & 4.17\\
DE & \textbf{\boldmath 1.15e-1} & \textbf{\boldmath 50.45} & \underline{4.37e-4} & 15.91 & 1.47e-4 & 56.00 & 1.88e-4 & 13.89 & 6.69e-3 & 12.50\\
A-SR & 1.37e-1 & 33.33 & 6.68e-4 & \underline{21.59} & \underline{3.48e-5} & \underline{68.00} & 1.27e-5 & 34.72 & \textbf{\boldmath 1.25e-3} & 20.83\\
RESTART & 6.25e1 & 1.50 & 1.86e-1 & 3.03 & 5.92e-3 & 4.00 & 6.95e-2 & 24.07 & 1.78e0 & 9.72\\
SR-Scientist & 2.50e-1 & 19.82 & 2.90e-2 & 15.91 & 1.08e-3 & 64.00 & \underline{6.50e-6} & \underline{61.11} & 9.44e-2 & \underline{41.67}\\
\rowcolor{blue!4}
\ours & \tablemeanstd{\underline{1.18e-1}}{0.08e-1} & \tablemeanstd{\underline{42.94}}{0.52} & \tablemeanstd{\textbf{\boldmath 3.49e-4}}{1.12e-4} & \tablemeanstd{\textbf{\boldmath 87.88}}{2.62} & \tablemeanstd{\textbf{\boldmath 1.20e-7}}{0.15e-7} & \tablemeanstd{\textbf{\boldmath 98.67}}{2.31} & \tablemeanstd{\textbf{\boldmath 1.34e-6}}{0.18e-6} & \tablemeanstd{\textbf{\boldmath 79.63}}{4.24} & \tablemeanstd{\underline{3.97e-3}}{1.28e-3} & \tablemeanstd{\textbf{\boldmath 61.11}}{2.41}\\
\bottomrule
\end{tabular}%
}
\end{table}

\endgroup

\noindent\textbf{Research Question 1: How accurately does L2G-SR fit benchmark equations?}
Across both backbones and all five LLM-SRBench domains, \ours{} achieves state-of-the-art performance in numerical fit and equation-level accuracy (Table~\ref{tab:main}). Its NMSE and $\acc$ rank first or second for every backbone and domain, showing consistent performance across models and domains. In the available final-formula comparisons, \ours{} also attains the lowest mean true TED in every domain under both backbones (Figure~\ref{fig:final_formula_ted}), indicating that it finds formulas structurally closer to the targets. Additional RQ1 results, including baseline variability and LLM usage, appear in Appendix~\ref{app:results} (Tables~\ref{tab:baseline_sd} and~\ref{tab:llm_usage}).
\par

\begingroup
\setlength{\intextsep}{4pt}
\begin{figure}[H]
\centering
\includegraphics[width=\linewidth]{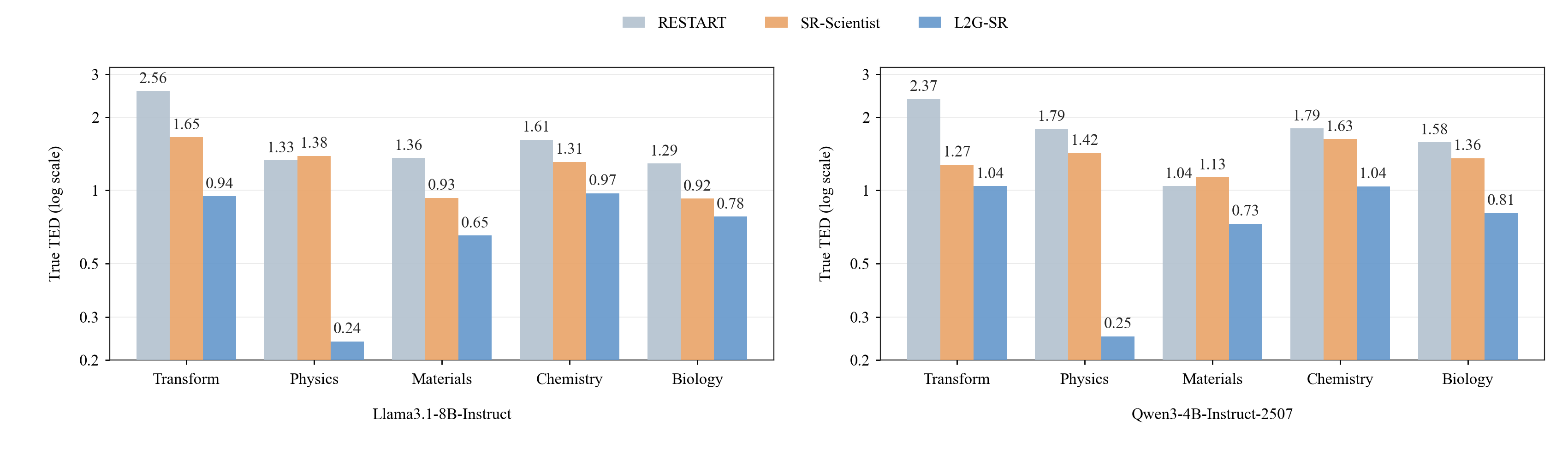}
\caption{True TED ($\downarrow$) of the final formulas from Table~\ref{tab:main}.}
\label{fig:final_formula_ted}
\end{figure}
\endgroup

\subsection{Verifier analysis}
\label{sec:verifier_analysis}
\noindent\textbf{Research Question 2: How well does the verifier perform?}\par\nopagebreak[4]
\noindent\textbf{Stage~2 random test set.} Table~\ref{tab:verifier_stage2} reports metrics on the random test set for the checkpoint used in search. The checkpoint performs strongly on this random dataset, with accurate estimates for structurally close formulas and useful detection of more distant ones.

\begingroup
\setlength{\intextsep}{2pt}
\begin{table}[H]
\begin{minipage}[t]{0.49\linewidth}
\vspace{0pt}
\centering
\small
\setlength{\tabcolsep}{4pt}
\begin{tabular}{@{}lc@{}}
\toprule
Metric & Selected\\
\midrule
Val. objective $\downarrow$ & 0.12613\\
Val. MAE$_{\leq1}$ $\downarrow$ & 0.13086\\
Val. recall$_{>1}$ $\uparrow$ (\%) & 52.63\\
\bottomrule
\end{tabular}
\caption{\raggedright Selected Stage~2 checkpoint on the synthetic validation set. Objective follows Eq.~(\ref{eq:loss}). MAE$_{\leq1}$ is the mean absolute error on targets with TED $\leq1$; Recall$_{>1}$ is the percentage of targets with TED $>1$ predicted at or above 1.}
\label{tab:verifier_stage2}

\end{minipage}\hfill
\begin{minipage}[t]{0.49\linewidth}
\vspace{0pt}
\centering
\small
\setlength{\tabcolsep}{4pt}
\begin{tabular}{@{}lcc@{}}
\toprule
Domain & MSE $\downarrow$ & Correlation\\
\midrule
Transform & $0.4550$ & $0.3836$\\
Physics & $0.1107$ & $0.3310$\\
Materials & $0.1148$ & $0.2505$\\
Chemistry & $0.1393$ & $0.5808$\\
Biology & $0.1000$ & $0.4318$\\
\bottomrule
\end{tabular}
\caption{\raggedright Verifier performance measured after search on recorded candidate formulas. Correlation is Pearson $r$.}
\label{tab:verifier_summary}

\end{minipage}
\end{table}
\endgroup

\noindent\textbf{Downstream predictions.} On recorded search candidates, the verifier predicts TED with reasonable accuracy (Table~\ref{tab:verifier_summary}), providing effective structural guidance that improves downstream search (Table~\ref{tab:ablation}).

\Needspace{5\baselineskip}
The matched Physics ablation gives more direct evidence for the verifier's role: replacing its score with validation MSE in the structural channel worsens test NMSE and accuracy while increasing the final formula's true TED (Table~\ref{tab:ablation}).

\subsection{Ablation study}
\label{sec:ablation}
\noindent\textbf{Research Question 3: Does TED feedback improve search under a matched budget?} We ablate the TED verifier on Physics by replacing its predicted TED score with internal-validation MSE throughout the structural channel. We keep two-role generation, diagnostics, fitting and simplification, the selection schedule, retention policy, and the 400-slot budget fixed, so the comparison isolates the verifier's structural guidance. As shown in Table~\ref{tab:ablation}, removing this guidance substantially worsens both NMSE and accuracy, demonstrating that the TED verifier is an important contributor to \ours{}'s search performance in this setting.

Notably, this downstream benefit is observed despite moderate and domain-dependent candidate-level verifier correlations (Table~\ref{tab:verifier_summary}). Together, Tables~\ref{tab:verifier_summary} and~\ref{tab:ablation} suggest that the verifier's value may lie less in precisely estimating absolute TED for every candidate than in providing sufficiently informative relative structural preferences during search.

\begingroup
\setlength{\intextsep}{4pt}
\begin{table}[t]
\begin{minipage}[t]{0.53\linewidth}
\vspace{0pt}
\centering
\footnotesize
\setlength{\tabcolsep}{1pt}
\begin{tabular}{@{}lcc@{}}
\toprule
Metric & MSE / MSE & MSE / TED\\
\midrule
NMSE $\downarrow$ & $(2.264\pm2.590)\mathrm{e}{-3}$ & $(2.702\pm1.109)\mathrm{e}{-4}$\\
$\acc$ $\uparrow$ (\%) & $74.24\pm10.74$ & $89.39\pm3.47$\\
True TED $\downarrow$ & $0.361\pm0.112$ & $0.239\pm0.010$\\
\bottomrule
\end{tabular}
\caption{\raggedright Matched Physics ablation: mean $\pm$ sample standard deviation across three seeds, with 44 tasks per seed. The numerical-only condition replaces TED by MSE throughout the structural-score path.}
\label{tab:ablation}
\end{minipage}\hfill
\begin{minipage}[t]{0.45\linewidth}
\vspace{0pt}
\centering
\scriptsize
\setlength{\tabcolsep}{2.5pt}
\begin{tabular}{@{}lccc@{}}
\toprule
Verifier & Objective $\downarrow$ & MAE$_{\leq1}$ $\downarrow$ & Recall$_{>1}$ $\uparrow$\\
\midrule
w/o tree address & 1.62e-1 & 1.79e-1 & \textbf{52.63}\\
Formula only & 2.29e-1 & 2.33e-1 & 36.84\\
w/o corr. branch & \underline{1.31e-1} & \underline{1.35e-1} & \underline{49.12}\\
One-stage & 1.47e-1 & 1.52e-1 & 35.09\\
Full verifier & \textbf{1.26e-1} & \textbf{1.31e-1} & \textbf{52.63}\\
\bottomrule
\end{tabular}
\caption{\raggedright Verifier ablations on the fixed synthetic validation set.}
\label{tab:representation_ablation}

\end{minipage}
\end{table}
\endgroup

\Needspace{4\baselineskip}
\noindent\textbf{Research Question 4: Which verifier inputs and training stages improve prediction?}\par\nopagebreak[4]
\noindent We compare the full verifier with four variants: (i) \emph{without tree address}, which zeroes the formula encoder's depth/address contribution and retrains both stages; (ii) \emph{formula only}, which removes both numerical encoders while preserving the fusion-head interface; (iii) \emph{without correlation branch}, which removes the signed-correlation encoder; and (iv) \emph{one-stage}, which retains the full architecture and inputs but optimizes the Stage~2 objective from random initialization. The formula-only and no-correlation variants start from the full model's Stage~1 checkpoint.

\noindent Table~\ref{tab:representation_ablation} shows that every ablation worsens both the objective and MAE$_{\leq1}$ relative to the full verifier. Recall also decreases without numerical inputs, the correlation branch, or Stage~1, while the no-tree-address variant ties the full model. Together, these results indicate that each tested component is needed to attain the best overall verifier performance. Training details appear in Appendix~\ref{app:representation_ablation}.

\begingroup
\setlength{\intextsep}{0pt}
\setlength{\abovecaptionskip}{2pt}
\subsection{Case study: Physics 11}
In round 57 of Physics 11, a T-round selects a parent with worse validation NMSE (1.019 versus 0.0476) but lower predicted TED (0.504 versus 0.758). Its child separates sinusoidal damping and adds a time--position term, reaching test NMSE $2.09\times10^{-15}$. Figure~\ref{fig:po11_case_study} shows this choice. This example shows that the TED verifier can guide the LLM toward the correct formula.

\begin{figure}[H]
    \centering
    \includegraphics[width=0.95\linewidth]{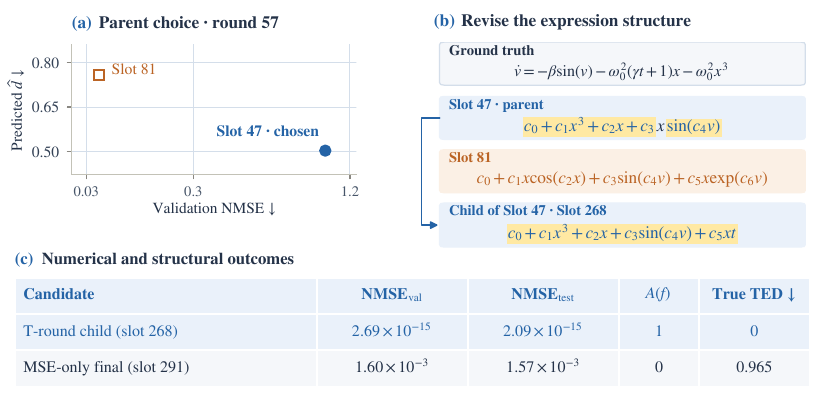}
    \caption{PO11 case study. (a) Structural feedback selects slot 47 despite its higher validation NMSE. (b) Ground truth, candidate structures, and the child generated from slot 47; $c_i$ denote fitted constants. (c) Numerical and structural outcomes.}
    \label{fig:po11_case_study}
\end{figure}
\endgroup
\pagebreak[0] 

\endgroup

\section{Conclusion and future work}
\label{sec:conclusion}
We introduced \ours{} through three contributions: a modified TED for structural supervision, a verifier that estimates structural distance to an unseen target from observations and candidate formulas, and an LLM-guided search agent that combines structural and numerical feedback to select and refine formulas. Limited computational resources led us to evaluate lightweight, low-cost LLMs. Future work will optimize the framework with the latest LLMs. We will also explore alternative training strategies, including post-training LLMs as structural verifiers or directly post-training the analyst to improve formula revision.
\par\newpage 

\section*{AI use statement}
AI was used for writing code, debugging, running some experiments, polishing the manuscript, checking mathematical derivations, literature searches, and identifying relevant studies for parts of the related work.

\section*{Reproducibility statement}
Appendices~\ref{app:distance}--\ref{app:search} specify the structural target, verifier inputs, optimization settings, and search decisions. Appendix~\ref{app:results} records measurements, denominators, and the reporting protocol. The accompanying writing artifacts contain the source-linked result summaries and scripts used to construct the tables and figures.

\section*{Ethics statement}
The study concerns equation discovery from benchmark observations and synthetic mathematical expressions. A plausible expression or a small structural score is not a certificate of a physical law. The formulas discovered by this model do not necessarily represent actual physical laws, and users should assess their correctness and validity.

\bibliography{iclr2027_conference}
\bibliographystyle{iclr2027_conference}
\clearpage
\appendix
\raggedbottom
\renewcommand{\floatpagefraction}{0.9}
\makeatletter
\setlength{\@fptop}{0pt}
\setlength{\@fpsep}{12pt}
\makeatother
\section{Structural target: implementation details}
\label{app:distance}
\subsection{Tree construction and preprocessing}
Variables retain their identities. Numeric-only subexpressions become a single \texttt{number} leaf; symbolic constants recognized by the parser become \texttt{param}. Function names and operation types supply internal labels. Nested Add and Mul nodes are flattened recursively. An AST is the representation; TED is the subsequent comparison algorithm.

The three preprocessing views in Equation~\ref{eq:target} are:
\begin{enumerate}
\item \textbf{Structure-preserving candidate view.} Parse the candidate without automatic evaluation, absorb parameter-only subexpressions, and collapse pure constants while preserving parse structure where possible. Process the reference using the ordinary evaluated view below.
\item \textbf{Ordinary evaluated view.} Use normal symbolic evaluation, absorb constant-only symbolic combinations, and collapse pure numeric subexpressions on both sides.
\item \textbf{Expanded view.} Absorb constants, expand products, expand power bases, combine powers, denest powers, collapse pure constants, and absorb remaining symbolic constant combinations.
\end{enumerate}
The expanded view applies forced power transformations, and TED is evaluated after this preprocessing. Each view is normalized by its own reference-tree size before taking the minimum. The result is not clipped to $[0,1]$.

\subsection{Costs and local assignment}
Node insertion and deletion cost one. Relabeling identical labels costs zero, as does exchanging \texttt{number} and \texttt{param}; other relabelings cost one. Whole-subtree insertion and deletion use Equation~\ref{eq:subtree}: for $s\leq5$ the cost is $s$, and larger subtrees receive a discount. This extended ordered computation is evaluated for every root pair.

When both roots belong to $\{\mathrm{Add},\mathrm{Mul}\}$, an additional child-assignment route is evaluated, including the root relabeling cost when necessary. Real-child matches recurse through the hybrid routine; padded matches pay a subtree insertion or deletion cost. The smaller ordered/assignment cost is retained. At other roots, the ordered routine is returned.

Changing a numeric exponent from two to three leaves the abstract power tree unchanged; changing $\sin$ to $\cos$ changes an operator label.

\subsection{Worked examples across the three views}
\label{app:ted_examples}
We illustrate five distinct mechanisms using a candidate $f$ and a reference $g$. Write $D_k$ for the raw hybrid edit cost and $n_k=|T(P_k^r(g))|$ for the reference-tree size in view $v_k$. The normalized score is $D_k/n_k$, and the final TED is the minimum across the three views. Table~\ref{tab:ted_worked_examples} reports all three calculations. Ties are resolved in the order $v_0,v_1,v_2$. Figures~\ref{fig:ted_worked_examples_ac} and~\ref{fig:ted_worked_examples_de} draw the actual candidate and reference trees after the selected preprocessing; $N$ denotes a \texttt{number} leaf.

\begin{table}[!htbp]
\centering
\caption{Worked TED calculations. Each view reports the raw edit cost divided by its reference-tree size. The letters match the tree diagrams.}
\label{tab:ted_worked_examples}
\small
\setlength{\tabcolsep}{4pt}
\begin{tabular}{cllcccc}
\toprule
 & Candidate $f$ & Reference $g$ & $v_0$ & $v_1$ & $v_2$ & TED \\
\midrule
(a) & $x+(y+z)$ & $(x+y)+z$ & $0/4$ & $0/4$ & $0/4$ & $0$ \\
(b) & $\sin(x+y)$ & $\cos(x+y)$ & $1/4$ & $1/4$ & $1/4$ & $1/4$ \\
(c) & $x(y+z)$ & $xy+xz$ & $5/7$ & $5/7$ & $0/7$ & $0$ \\
(d) & $(x+y)^2$ & $x^2+2xy+y^2$ & $9/11$ & $9/11$ & $0/11$ & $0$ \\
(e) & $x+\sin(yz+w)$ & $x$ & $6.6/1$ & $6.6/1$ & $6.6/1$ & $6.6$ \\
\bottomrule
\end{tabular}
\end{table}

\noindent\textbf{(a) Associativity.}
For $f=x+(y+z)$ and $g=(x+y)+z$, flattening nested Add nodes produces one Add root with leaves $x,y,z$ on both sides. Each reference tree has four nodes, and no edits are needed: $v_0=v_1=v_2=0/4=0$. The selected view is $v_0$.

\noindent\textbf{(b) Function relabeling.}
For $f=\sin(x+y)$ and $g=\cos(x+y)$, the Add subtree and its variable leaves agree. Only the root function label changes, costing one. All three reference trees have four nodes, so $v_0=v_1=v_2=1/4$; $v_0$ is selected. The highlighted roots show this single edit.

\noindent\textbf{(c) Product expansion.}
For $f=x(y+z)$ and $g=xy+xz$, views $v_0$ and $v_1$ retain a Mul root on the candidate and an Add root on the reference. A minimum-cost assignment changes the root at cost one, matches $x$ to $xy$ at cost two (insert Mul and $y$), and matches $y+z$ to $xz$ at cost two (Add to Mul and $y$ to $x$). Thus $D_0=D_1=1+2+2=5$ and $n_0=n_1=7$. View $v_2$ distributes the product, making both trees identical with seven nodes: $v_2=0/7$. The selected score is zero.

\begin{figure}[!htbp]
\centering
\includegraphics[width=\linewidth]{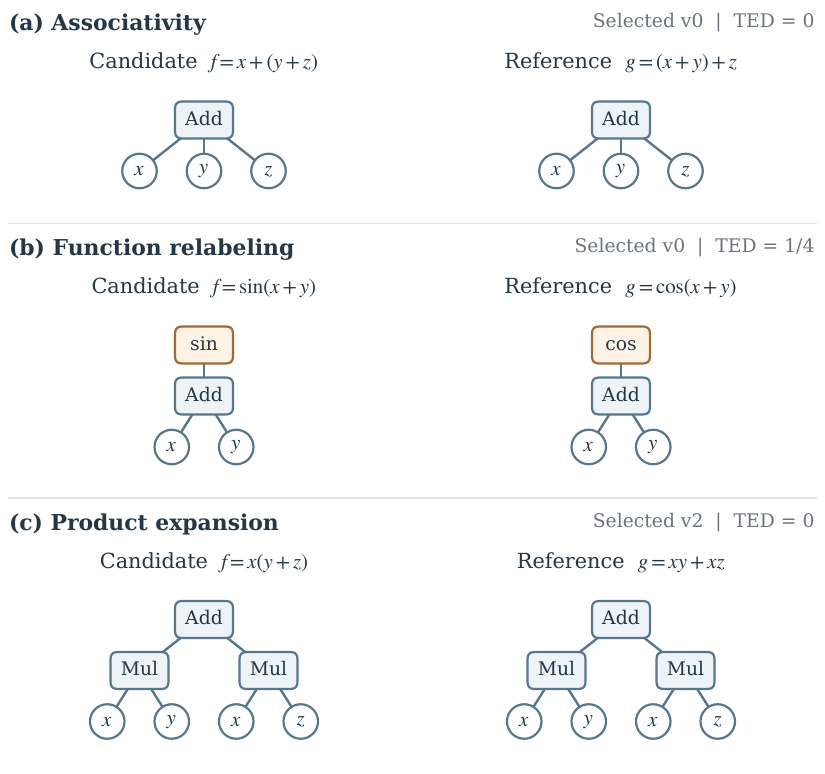}
\caption{Selected abstract trees for examples (a)--(c). Input formulas appear above each tree; the diagrams show the trees after the indicated preprocessing. Flattening removes grouping differences in (a), the highlighted function labels require one edit in (b), and product expansion gives identical trees in (c).}
\label{fig:ted_worked_examples_ac}
\end{figure}

\noindent\textbf{(d) Polynomial expansion.}
For $f=(x+y)^2$ and $g=x^2+2xy+y^2$, views $v_0$ and $v_1$ compare $\mathrm{Pow}(\mathrm{Add}(x,y),N)$ with an Add root whose children are $\mathrm{Pow}(x,N)$, $\mathrm{Pow}(y,N)$, and $\mathrm{Mul}(N,x,y)$. An optimal ordered edit changes the root at cost one; converts $\mathrm{Add}(x,y)$ to $\mathrm{Pow}(x,N)$ at cost two; converts the remaining $N$ to $\mathrm{Pow}(y,N)$ at cost two; and inserts the four-node mixed term at cost four. Hence $D_0=D_1=1+2+2+4=9$, with $n_0=n_1=11$. View $v_2$ expands the square, giving identical 11-node trees and the selected score $0/11=0$. Numeric exponent and coefficient leaves are all labeled $N$.

\noindent\textbf{(e) Subtree deletion discount.}
For $f=x+\sin(yz+w)$ and $g=x$, all views retain the extra branch. The outlined branch has six nodes: sin, Add, Mul, $y,z,w$. Deleting it costs $\kappa(6)=0.6(6-5)+5=5.6$; deleting the outer Add root costs one more and leaves $x$. Therefore $D_0=D_1=D_2=6.6$, while every reference tree contains only one node. All three normalized scores are $6.6/1=6.6$, and $v_0$ is selected. This example also shows why normalization by the reference size does not bound the score by one.

\begin{figure}[!htbp]
\centering
\includegraphics[width=\linewidth]{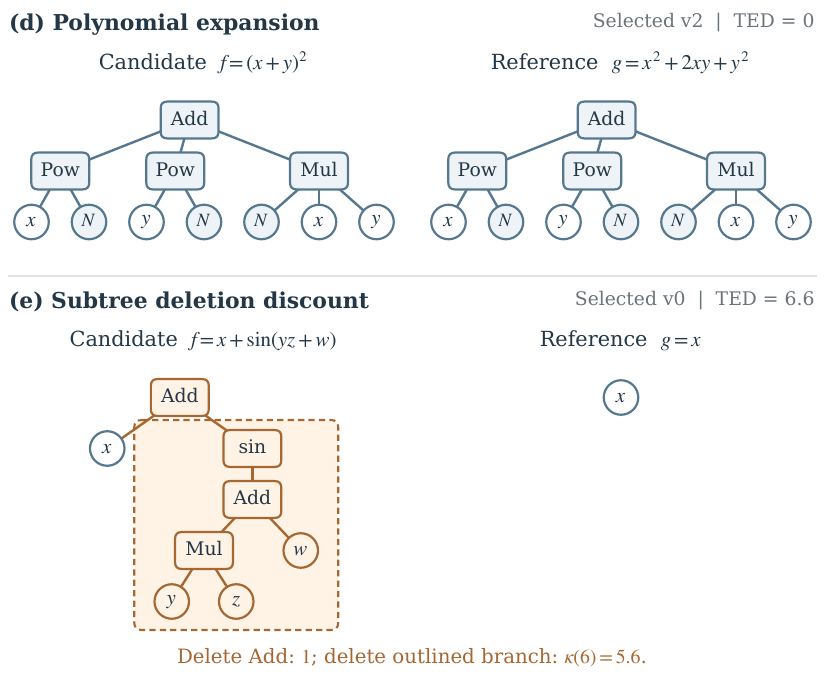}
\caption{Selected abstract trees for examples (d)--(e). In (d), expanding the square yields identical trees; $N$ denotes an abstract numeric leaf. In (e), the dashed outline contains the six-node branch deleted at discounted cost 5.6; the highlighted Add root incurs another unit deletion cost. Only the variable $x$ is retained.}
\label{fig:ted_worked_examples_de}
\end{figure}

\begingroup
\raggedbottom
\subsection{Additional examples of partial structural matches}
\label{app:ted_additional_examples}
Table~\ref{tab:ted_additional_examples} gives six further formula pairs, continuing the notation above. All select $v_0$. The tree diagrams preserve the child order returned by preprocessing; commutative assignment can still match children across different positions. Orange nodes indicate deletions, insertions, or relabelings along one minimum-cost edit path. As before, $N$ denotes a \texttt{number} leaf.

\begin{table}[!htbp]
\centering
\caption{Additional worked TED calculations. Each view reports the raw edit cost divided by its reference-tree size. All six pairs select $v_0$, using the first view in a tie.}
\label{tab:ted_additional_examples}
\small\setlength{\tabcolsep}{4pt}
\begin{tabular}{cllcccc}
\toprule
 & Candidate $f$ & Reference $g$ & $v_0$ & $v_1$ & $v_2$ & TED \\
\midrule
(f) & $x^2\sin y$ & $\sin y$ & $4/2$ & $4/2$ & $4/2$ & $2$ \\
(g) & $(x+y)(x+z)$ & $xy+yz$ & $4/7$ & $4/7$ & $6/7$ & $4/7$ \\
(h) & $(x+y)^3$ & $x^3+y^3$ & $5/7$ & $5/7$ & $11.2/7$ & $5/7$ \\
(i) & $\sqrt{x+y}$ & $\sqrt{x}$ & $2/3$ & $2/3$ & $2/3$ & $2/3$ \\
(j) & $(x+y)(x+z)$ & $x(x+y)$ & $2/5$ & $2/5$ & $6/7$ & $2/5$ \\
(k) & $(x+y)^2$ & $(x+y)^3$ & $0/5$ & $0/5$ & $7.6/19$ & $0$ \\
\bottomrule
\end{tabular}
\end{table}

\noindent\textbf{(f) Extra multiplicative factor.}
For $f=x^2\sin y$ and $g=\sin y$, delete the three-node factor $\mathrm{Pow}(x,N)$ and the outer Mul root, retaining the entire $\sin y$ subtree. The cost is $3+1=4$, and the reference has two nodes. All views therefore give $4/2=2$.

\noindent\textbf{(g) Product versus sum.}
For $f=(x+y)(x+z)$ and $g=xy+yz$, the unexpanded view changes the Mul root to Add at cost one, changes $\mathrm{Add}(x,y)$ to $\mathrm{Mul}(x,y)$ at cost one, and changes $\mathrm{Add}(x,z)$ to $\mathrm{Mul}(y,z)$ at cost two. Thus $v_0=v_1=(1+1+2)/7=4/7$. Expansion gives $x^2+xy+xz+yz$; deleting the extra three-node terms $x^2$ and $xz$ gives $v_2=6/7$. The unexpanded view has the smaller distance.

\begin{figure}[!htbp]
\centering
\includegraphics[width=\linewidth]{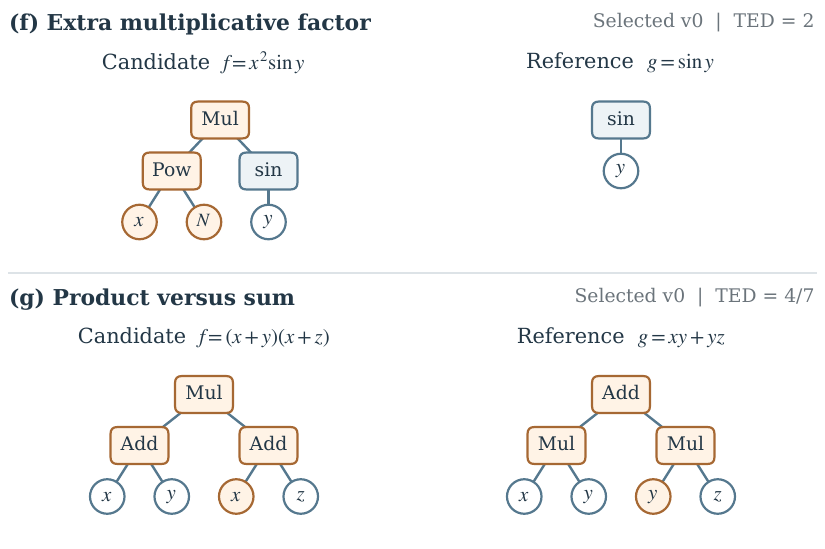}
\caption{Selected $v_0$ trees for (f)--(g). Orange marks nodes edited along a minimum-cost path. In (f), the extra power subtree and Mul root are deleted. In (g), four relabelings transform the product of sums into the reference sum of products.}
\label{fig:ted_additional_examples_fg}
\end{figure}

\noindent\textbf{(h) Cube versus separate cubes.}
For $f=(x+y)^3$ and $g=x^3+y^3$, $v_0$ and $v_1$ compare $\mathrm{Pow}(\mathrm{Add}(x,y),N)$ with $\mathrm{Add}(\mathrm{Pow}(x,N),\mathrm{Pow}(y,N))$. Change the root at cost one, convert $\mathrm{Add}(x,y)$ to $\mathrm{Pow}(x,N)$ at cost two, and turn the remaining $N$ into $\mathrm{Pow}(y,N)$ at cost two. Hence $v_0=v_1=5/7$. After expansion, the two additional mixed terms $3x^2y$ and $3xy^2$ each have six nodes. Their discounted deletion costs give $v_2=2\kappa(6)/7=11.2/7=1.6$, so the final score is $5/7$.

\noindent\textbf{(i) Square-root argument.}
For $f=\sqrt{x+y}$ and $g=\sqrt{x}$, the trees are $\mathrm{Pow}(\mathrm{Add}(x,y),N)$ and $\mathrm{Pow}(x,N)$. The exponent $1/2$ becomes one numeric leaf. Retain the Pow root, $x$, and $N$; delete the inner Add and $y$. All views give $2/3$.

\begin{figure}[!htbp]
\centering
\includegraphics[width=\linewidth]{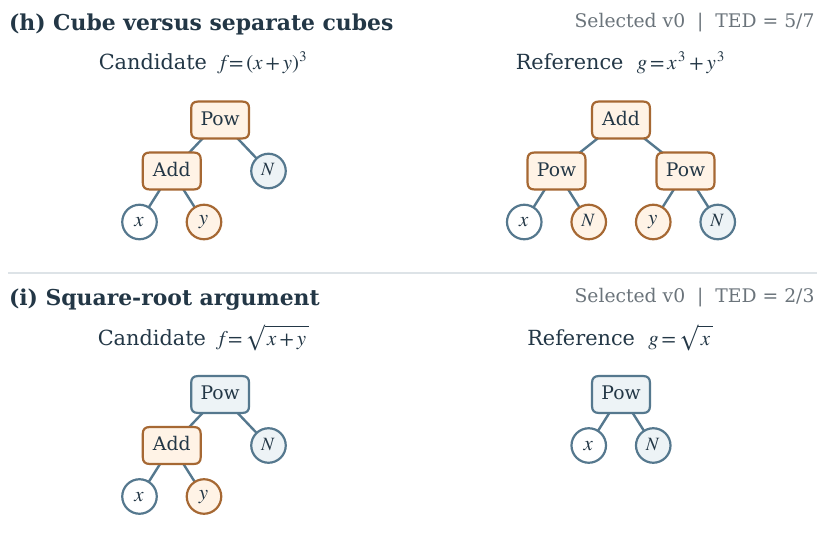}
\caption{Selected $v_0$ trees for (h)--(i). In (h), orange marks three relabelings and two inserted nodes; the original exponent leaf is retained as the reference's rightmost $N$. In (i), deleting the orange Add and $y$ nodes leaves the reference square-root tree. Numeric exponents are represented by $N$.}
\label{fig:ted_additional_examples_hi}
\end{figure}

\noindent\textbf{(j) Shared additive factor.}
For $f=(x+y)(x+z)$ and $g=x(x+y)$, commutative assignment matches the common $\mathrm{Add}(x,y)$ subtree across child positions. In the other factor, delete Add and $z$ from $x+z$, retaining $x$. The raw cost is two and the reference has five nodes, giving $v_0=v_1=2/5$. Expansion produces $x^2+xy+xz+yz$ and $x^2+xy$; deleting the two extra three-node products gives $v_2=6/7$. The final score is $2/5$.

\noindent\textbf{(k) Numeric exponent abstraction.}
For $f=(x+y)^2$ and $g=(x+y)^3$, $v_0$ and $v_1$ both produce $\mathrm{Pow}(\mathrm{Add}(x,y),N)$, so their scores are $0/5$. In $v_2$, the pure-power terms still match because numeric exponents are abstracted. Matching the candidate mixed term $2xy$ to one reference mixed term requires two insertions; inserting the remaining six-node mixed term costs $\kappa(6)=5.6$. The reference has 19 nodes, giving $v_2=(2+5.6)/19=0.4$. The final TED is zero, illustrating that zero structural distance does not imply algebraic equivalence.

\begin{figure}[!htbp]
\centering
\includegraphics[width=\linewidth]{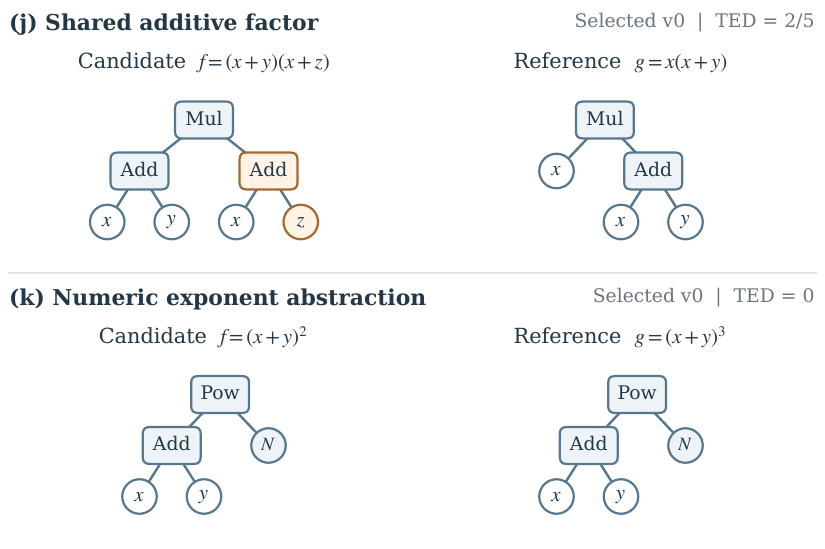}
\caption{Selected $v_0$ trees for (j)--(k). In (j), commutative matching preserves the common additive factor; deleting the orange Add and $z$ nodes supplies the remaining $x$ factor. In (k), both exponent values become the same numeric leaf $N$, making the abstract trees identical.}
\label{fig:ted_additional_examples_jk}
\end{figure}
\par\endgroup

\section{Synthetic formula and observation generation}
\label{app:generator}
We describe the archived August 30 configuration. Listed probabilities are rounded proposals; feasibility checks and training resampling alter the resulting distributions. Categorical weights are renormalized over permitted outcomes.

\subsection{Empirical priors and formula size}
\label{app:generator_priors}
\begin{enumerate}
\item \textbf{Source formulas and weighting.}
Priors use 32,177 LLM-SR formulas from 153 problems, 20,400 PySR formulas from 239 problems, and 120 folded SRBench/Feynman targets, each treated as one problem. Variable, root, operator, and constant statistics give equal weight to sources, problems within each source, and formulas within each problem:
\begin{equation}
 w(g,p,s)=\frac{1}{3|\mathcal P_s||\mathcal F_{s,p}|},
\end{equation}
Here $\mathcal P_s$ contains source $s$'s problems, and $\mathcal F_{s,p}$ contains its formulas for problem $p$. Before counting, negative powers use inverses, and exponents $1/2$, $2$, and $3$ use square-root, square, and cube operators.

\item \textbf{Node-count and operator-count prior.}
Let $N$ count all nodes, $B$ binary operators (e.g., addition or division), and $U$ unary operators (e.g., sine or logarithm). With $B+1$ leaves,
\[
 N=2B+U+1.
\]
For this joint size prior only, the source weights are
$(\pi_{\mathrm{GT}},\pi_{\mathrm{PySR}},\pi_{\mathrm{LLM\text{-}SR}})=(0.2,0.2,0.6)$:
\begin{equation}
 P_0(N=n,B=b)=
 \sum_{s,p,g}\frac{\pi_s}{|\mathcal P_s||\mathcal F_{s,p}|}
 \mathbf 1[N(g)=n,\ B(g)=b].
 \label{eq:generator_joint}
\end{equation}
Exclude formulas exceeding 100 nodes, then adjust the length marginal while preserving $P_0(B\mid N)$. Draw integer $N$ from the adjusted row sums and $B$ from that row's feasible entries. Table~\ref{tab:generator_priors} groups lengths for display only. The legacy length-bin floor of $0.001$ is inactive.

\item \textbf{Variable counts, root families, and feasibility.}
Limit formulas to 100 nodes and 10 variables. Clip variable-bin upper endpoints to $\min(10,\lfloor(N+1)/2\rfloor)$, remove empty bins, then select a bin and draw an integer uniformly within it. A one-node tree is a variable; $N=2$ forces a unary root; $N>2$ excludes a leaf root. The $B+1$ leaves must accommodate all variables after reserving exponent leaves. A unary root or required inverse needs a unary operator; a non-division binary root with required division needs $B\geq2$. Retry infeasible proposals.
\end{enumerate}

\subsubsection{Theory of mixture priors and conditional sampling}
\label{app:generator_prior_theory}

Let $P_s$ be the empirical law obtained by averaging formulas within each problem and then problems within source $s$, as specified above. The mixture is $P_\pi=\sum_s\pi_sP_s$. For the joint size prior, $\pi=(0.2,0.2,0.6)$ in GT/PySR/LLM-SR order, and each source contains only formulas retained after the 100-node cutoff and their nonempty problems. Operator statistics instead use $\pi_s=1/3$.

\begin{proposition}[Conditional sampling from the mixed size prior]
\label{prop:generator_bayes_mixture}
For any sampled length $n$ with $P_\pi(N=n)>0$, the source weights conditional on that length and the resulting binary-count law are
\begin{equation}
 \begin{aligned}
 \alpha_s(n)&=\frac{\pi_sP_s(N=n)}{\sum_t\pi_tP_t(N=n)},\\
 P_0(B=b\mid N=n)
 &=\frac{\sum_s\pi_sP_s(N=n,B=b)}{\sum_s\pi_sP_s(N=n)}
 =\sum_s\alpha_s(n)P_s(B=b\mid N=n).
 \end{aligned}
 \label{eq:generator_bayes_mixture}
\end{equation}
Sources with no mass at $n$ are omitted. Thus normalizing the mixed joint row implements a source draw from $\alpha(n)$ followed by a source-specific conditional count draw.
\end{proposition}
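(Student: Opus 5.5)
The plan is to compute directly from the definition of the mixture $P_\pi=\sum_s\pi_sP_s$, with each $P_s$ a probability law on pairs $(N,B)$. The only tool needed is the definition of conditional probability together with linearity of the mixture. First, marginalizing over $B$ commutes with the finite sum over sources, which gives
\[
P_\pi(N=n)=\sum_s\pi_sP_s(N=n).
\]
I would also check that the joint prior $P_0$ of Equation~\ref{eq:generator_joint} is exactly $P_\pi$ with $\pi=(0.2,0.2,0.6)$. Here $\sum_{p,g}\frac{1}{|\mathcal P_s||\mathcal F_{s,p}|}\mathbf 1[\cdot]$ is by definition $P_s(N=n,B=b)$, the within-problem then within-source average. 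The 100-node cutoff is absorbed by defining each $P_s$ on the retained formulas, as stated before the proposition.

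Second, for $n$ with $P_\pi(N=n)>0$, apply conditional probability:
\[
P_0(B=b\mid N=n)=\frac{P_\pi(N=n,B=b)}{P_\pi(N=n)}=\frac{\sum_s\pi_sP_s(N=n,B=b)}{\sum_s\pi_sP_s(N=n)}.
\]
This is the first equality in Equation~\ref{eq:generator_bayes_mixture}. For the second equality, split the sources according to whether $P_s(N=n)>0$. If $P_s(N=n)=0$, then $P_s(N=n,B=b)=0$ as well, so that source contributes nothing to the numerator or the denominator; this is why such sources may be omitted. For the remaining sources, write
\[
P_s(N=n,B=b)=P_s(N=n)\,P_s(B=b\mid N=n).
\]
Substituting this and dividing by the denominator gives $\sum_s\alpha_s(n)P_s(B=b\mid N=n)$, where $\alpha_s(n)$ is the stated posterior source weight. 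One checks that $\alpha_s(n)\ge0$ and $\sum_s\alpha_s(n)=1$.

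Third, the interpretive claim reads this identity as a two-stage sampling procedure. Draw a source $S\sim\alpha(n)$, then draw $B\sim P_S(\cdot\mid N=n)$. The law of total probability gives $B$ the marginal $\sum_s\alpha_s(n)P_s(B=b\mid N=n)$, which equals the normalized mixed row. Hence normalizing row $n$ of $P_0$ is equivalent in distribution to the hierarchical draw. The step needing the most care is not algebraic but definitional: the length-marginal adjustment in the text preserves $P_0(B\mid N)$, so the conditional identity still applies to the sampler after reweighting $N$. Operator statistics use $\pi_s=1/3$ and do not enter this argument. I expect no real obstacle beyond stating these zero-mass conventions explicitly.
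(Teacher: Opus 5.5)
Your proposal is correct and follows the paper's proof essentially step for step. Both compute the mixture marginal, apply conditional probability, drop zero-mass sources, factor $P_s(N=n,B=b)=P_s(N=n)P_s(B=b\mid N=n)$, and read the result as a two-stage draw via total probability. The paper is only more explicit about the latent source variable $S$, with $P_\pi(S=s,N=n,B=b)=\pi_sP_s(N=n,B=b)$, from which it derives $\alpha_s(n)=P_\pi(S=s\mid N=n)$ rather than simply naming it the posterior weight as you do.
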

\begin{proof}
Let $S$ be the selected source and $Z_n=\sum_s\pi_sP_s(N=n)>0$. The joint and marginal masses are
\[
 \begin{aligned}
 P_\pi(S=s,N=n,B=b)&=\pi_sP_s(N=n,B=b),\\
 P_\pi(S=s,N=n)&=\sum_b\pi_sP_s(N=n,B=b)=\pi_sP_s(N=n),\\
 P_\pi(N=n)&=\sum_sP_\pi(S=s,N=n)=Z_n.
 \end{aligned}
\]
Hence
\[
 \alpha_s(n)=\frac{P_\pi(S=s,N=n)}{P_\pi(N=n)}
 =\frac{\pi_sP_s(N=n)}{Z_n}\geq0,
 \qquad
 \sum_s\alpha_s(n)=\frac{Z_n}{Z_n}=1.
\]
A zero-mass source satisfies $P_s(N=n)=0\Rightarrow P_s(N=n,B=b)=0$ for every $b$. On the remaining sources,
\[
 \begin{aligned}
 P_0(B=b\mid N=n)
 &=\frac{\sum_s\pi_sP_s(N=n,B=b)}{Z_n}\\
 &=\sum_{s:P_s(N=n)>0}
   \frac{\pi_sP_s(N=n)}{Z_n}P_s(B=b\mid N=n)\\
 &=\sum_{s:P_s(N=n)>0}\alpha_s(n)P_s(B=b\mid N=n).
 \end{aligned}
\]
The resulting law is normalized and equals the two-stage sampling law:
\[
 \begin{aligned}
 \sum_bP_0(B=b\mid N=n)
 &=\sum_{s:P_s(N=n)>0}\alpha_s(n)
   \underbrace{\sum_bP_s(B=b\mid N=n)}_{1}=1,\\
 \Pr_{\mathrm{two\text{-}stage}}(B=b\mid n)
 &=\sum_{s:P_s(N=n)>0}
   \Pr(S=s\mid n)\Pr(B=b\mid S=s,n)\\
 &=\sum_{s:P_s(N=n)>0}\alpha_s(n)P_s(B=b\mid N=n)
 =P_0(B=b\mid N=n).
 \end{aligned}
\]
\end{proof}
This justifies mixing joint counts before normalization: conditioning on length changes the appropriate source weights from $\pi_s$ to $\alpha_s(n)$.

\begin{proposition}[Minimal-change length reweighting and feasible sampling]
\label{prop:generator_marginal_projection}
Let $\bar P_0$ be the normalized retained joint size law, $p_n=\bar P_0(N=n)$, and $q$ the adjusted length distribution, supported where $p_n>0$. Row scaling gives
\begin{equation}
 Q(n,b)=\frac{q_n}{p_n}\bar P_0(n,b)
       =q_n\bar P_0(b\mid n),
 \label{eq:generator_length_reweight}
\end{equation}
with zero mass on unused rows. This law has length marginal $q$ and preserves the binary-count conditional at each used length. Among joint laws $R(N,B)$ supported on $\bar P_0$ with length marginal $q$, it uniquely minimizes $D_{\mathrm{KL}}(R\Vert\bar P_0)$, attaining the minimum $D_{\mathrm{KL}}(q\Vert p)$.

Let $h$ collect the sampled variable count, root family, and division/inverse requirements, and let $\mathcal B(n,h)$ be the admitted binary counts. For $q_n>0$ and positive admissible row mass, restricting either $Q(n,\cdot)$ or $\bar P_0(n,\cdot)$ gives the same sampling kernel:
\begin{equation}
 K_B(b\mid n,h)
 =\frac{\bar P_0(n,b)\,\mathbf 1[b\in\mathcal B(n,h)]}
 {\sum_{b'\in\mathcal B(n,h)}\bar P_0(n,b')}.
 \label{eq:generator_feasible_conditional}
\end{equation}
\end{proposition}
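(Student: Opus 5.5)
The proof splits into three parts: the marginal and conditional properties of the row scaling $Q$ in \eqref{eq:generator_length_reweight}, the KL minimization, and the feasible sampling kernel \eqref{eq:generator_feasible_conditional}. The first part is direct computation. Summing $Q(n,b)=q_n\bar P_0(b\mid n)$ over $b$ gives $q_n$, because $\bar P_0(\cdot\mid n)$ is normalized whenever $p_n>0$. Since $q$ is supported only where $p_n>0$, rows with $p_n=0$ carry zero mass and the total is $\sum_n q_n=1$. For $q_n>0$, dividing $Q(n,b)$ by $q_n$ returns $\bar P_0(b\mid n)$, so the conditional at each used length is unchanged. Support inclusion in $\bar P_0$ is also immediate.

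For the KL claim, I take any joint law $R$ supported on the support of $\bar P_0$ with length marginal $q$, and apply the chain rule for KL divergence:
\[
D_{\mathrm{KL}}(R\Vert\bar P_0)=D_{\mathrm{KL}}(q\Vert p)+\sum_{n:q_n>0}q_n\,D_{\mathrm{KL}}\bigl(R(\cdot\mid n)\Vert\bar P_0(\cdot\mid n)\bigr).
\]
This follows by writing $R(n,b)=q_nR(b\mid n)$ and $\bar P_0(n,b)=p_n\bar P_0(b\mid n)$ and splitting the logarithm. Two facts keep every term finite and well defined: support inclusion ensures $R(\cdot\mid n)\ll\bar P_0(\cdot\mid n)$, and $q\ll p$ holds by hypothesis. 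The first term depends only on $q$. Each conditional term is nonnegative and vanishes exactly when $R(\cdot\mid n)=\bar P_0(\cdot\mid n)$, by Gibbs' inequality with equality if and only if the distributions agree. The minimum is therefore $D_{\mathrm{KL}}(q\Vert p)$, attained by $Q$. It is attained only when all rows with $q_n>0$ match $\bar P_0(\cdot\mid n)$, which determines $R=Q$ uniquely because rows with $q_n=0$ are forced to be zero.

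For the kernel, I restrict row $n$ of $Q$ to $\mathcal B(n,h)$ and renormalize. The numerator is $(q_n/p_n)\bar P_0(n,b)\mathbf 1[b\in\mathcal B(n,h)]$, and the denominator is $(q_n/p_n)\sum_{b'\in\mathcal B(n,h)}\bar P_0(n,b')$. The denominator is positive because $q_n>0$ and the admissible row mass is assumed positive. The common factor $q_n/p_n>0$ cancels, which yields \eqref{eq:generator_feasible_conditional}; the same expression results from restricting $\bar P_0(n,\cdot)$ directly.

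None of these steps is deep. The main care needed is in the uniqueness argument: zero-probability rows must be handled with the convention $0\log0=0$, and the claim must be read as uniqueness among laws respecting the support constraint, with $q\ll p$ guaranteeing that $D_{\mathrm{KL}}(q\Vert p)$ is finite.
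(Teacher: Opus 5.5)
Your proposal is correct and matches the paper's proof essentially step for step. Both proceed in the same order: direct verification of the length marginal and the preserved conditional; the chain-rule decomposition $D_{\mathrm{KL}}(R\Vert\bar P_0)=D_{\mathrm{KL}}(q\Vert p)+\sum_{n:q_n>0}q_n D_{\mathrm{KL}}(R(\cdot\mid n)\Vert\bar P_0(\cdot\mid n))$ with Gibbs' inequality giving optimality and uniqueness (the paper derives it from $x\log x-x+1\geq0$); and cancellation of the common factor $q_n/p_n$ in the restricted kernel.
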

\begin{proof}
\textbf{Marginal and conditional laws.}
For $q_n>0$, the support assumption gives $p_n>0$, and
\[
 \begin{aligned}
 \sum_bQ(n,b)&=\frac{q_n}{p_n}\sum_b\bar P_0(n,b)
              =\frac{q_n}{p_n}p_n=q_n,\\
 \sum_{n,b}Q(n,b)&=\sum_nq_n=1,\\
 Q(b\mid n)&=\frac{Q(n,b)}{q_n}
             =\frac{\bar P_0(n,b)}{p_n}=\bar P_0(b\mid n),\\
 \Pr_{\mathrm{sampler}}(N=n,B=b)
 &=q_n\bar P_0(b\mid n)=Q(n,b).
 \end{aligned}
\]
Rows with $q_n=0$ have $Q(n,b)=0$ for all $b$.

\textbf{KL decomposition.}
For an admissible $R$, set $r_n(b)=R(b\mid n)$ and $t_n(b)=\bar P_0(b\mid n)$ on rows with $q_n>0$. Then
\[
 R(n,b)=q_nr_n(b),\qquad
 \bar P_0(n,b)=p_nt_n(b),\qquad
 \sum_br_n(b)=\sum_bt_n(b)=1.
\]
In KL sums, zero-probability terms are interpreted as zero, using $0\log0=0$. Since $R$ is supported on $\bar P_0$, every denominator at a positive-mass term is positive. Expanding the logarithm gives
\begin{equation}
 \begin{aligned}
 D_{\mathrm{KL}}(R\Vert\bar P_0)
 &=\sum_{n:q_n>0}\sum_bq_nr_n(b)
   \log\frac{q_nr_n(b)}{p_nt_n(b)}\\
 &=\sum_{n:q_n>0}q_n\sum_br_n(b)
   \left(\log\frac{q_n}{p_n}+\log\frac{r_n(b)}{t_n(b)}\right)\\
 &=\sum_{n:q_n>0}q_n\log\frac{q_n}{p_n}
   \underbrace{\sum_br_n(b)}_{1}
   +\sum_{n:q_n>0}q_n\sum_br_n(b)\log\frac{r_n(b)}{t_n(b)}\\
 &=D_{\mathrm{KL}}(q\Vert p)
   +\sum_{n:q_n>0}q_nD_{\mathrm{KL}}(r_n\Vert t_n).
 \end{aligned}
 \label{eq:generator_kl_decomposition}
\end{equation}

\textbf{Optimality and uniqueness.}
The standard inequality $x\log x-x+1\geq0$ for $x\geq0$, with equality only at $x=1$, yields
\[
 \begin{aligned}
 D_{\mathrm{KL}}(r_n\Vert t_n)
 &=\sum_{b:t_n(b)>0}t_n(b)
   \left[
    \frac{r_n(b)}{t_n(b)}\log\frac{r_n(b)}{t_n(b)}
    -\frac{r_n(b)}{t_n(b)}+1
   \right]\geq0,\\
 D_{\mathrm{KL}}(r_n\Vert t_n)=0
 &\quad\Longleftrightarrow\quad r_n=t_n.
 \end{aligned}
\]
The inserted linear terms sum to $-\sum_br_n(b)+\sum_bt_n(b)=0$. Therefore
\[
 \begin{aligned}
 D_{\mathrm{KL}}(R\Vert\bar P_0)
 &\geq D_{\mathrm{KL}}(q\Vert p)
   =D_{\mathrm{KL}}(Q\Vert\bar P_0),\\
 D_{\mathrm{KL}}(R\Vert\bar P_0)=D_{\mathrm{KL}}(q\Vert p)
 &\Longleftrightarrow
   r_n=t_n\ \text{for every }n\text{ with }q_n>0\\
 &\Longleftrightarrow
   R(n,b)=q_n\bar P_0(b\mid n)=Q(n,b)\ \text{on all used rows}\\
 &\Longleftrightarrow R=Q.
 \end{aligned}
\]
The last equivalence also uses $q_n=0\Rightarrow R(n,b)=Q(n,b)=0$ for all $b$.

\textbf{Feasibility conditioning.}
At fixed $(n,h)$ with $q_n>0$, write $\mathcal B=\mathcal B(n,h)$ and $A_{n,h}=\sum_{b'\in\mathcal B}\bar P_0(n,b')>0$. Then
\[
 \begin{aligned}
 K_B(b\mid n,h)
 &=\frac{Q(b\mid n)\mathbf 1[b\in\mathcal B]}
         {\sum_{b'\in\mathcal B}Q(b'\mid n)}\\
 &=\frac{Q(n,b)\mathbf 1[b\in\mathcal B]}
         {\sum_{b'\in\mathcal B}Q(n,b')}\\
 &=\frac{(q_n/p_n)\bar P_0(n,b)\mathbf 1[b\in\mathcal B]}
         {(q_n/p_n)A_{n,h}}
 =\frac{\bar P_0(n,b)\mathbf 1[b\in\mathcal B]}{A_{n,h}},\\
 \sum_bK_B(b\mid n,h)&=\frac{A_{n,h}}{A_{n,h}}=1.
 \end{aligned}
\]
Thus feasibility conditioning cancels both the length marginal and the row multiplier, giving Equation~\ref{eq:generator_feasible_conditional}.
\end{proof}
The KL characterization makes row scaling a minimal-change update of the empirical joint law under the prescribed length marginal; the sampler realizes this minimizer directly. A zero denominator in the feasible kernel triggers a retry. This result concerns the size proposal and its local feasibility mask: subsequent operator assignment, exponent-leaf reservation, and formula filtering add further constraints. Structural retries may redraw $N$, and simplification may change its value, so the final accepted length distribution need not equal $q$. The separate empirical priors and sequential masks do not define a single empirically learned joint law over all formula properties.

\Needspace{5\baselineskip}
\begin{proposition}[Conditional operator priors from weighted occurrences]
\label{prop:generator_operator_bayes}
For operator $o$ of arity $a$, let $m_s(o)=\mathbb E_s[c_o(G)]$ and $m_s(a)=\mathbb E_s[c_a(G)]$ under the same problem/formula hierarchy. If $\sum_s\pi_sm_s(a)>0$, mixing occurrence counts and conditioning on arity gives
\begin{equation}
 P(o\mid a)=\frac{\sum_s\pi_sm_s(o)}{\sum_s\pi_sm_s(a)}
 =\sum_{s:m_s(a)>0}
 \underbrace{\frac{\pi_sm_s(a)}{\sum_t\pi_tm_t(a)}}_{\alpha_s(a)}
 \frac{m_s(o)}{m_s(a)}.
 \label{eq:generator_operator_bayes}
\end{equation}
Here $\alpha_s(a)$ is the source probability conditional on an arity-$a$ occurrence.
\end{proposition}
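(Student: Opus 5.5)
The plan is to mirror the proof of Proposition~\ref{prop:generator_bayes_mixture}, but with weighted occurrences in place of formula probabilities. First I make precise what ``mixing occurrence counts'' means. Under the mixture, each source $s$ contributes expected occurrence mass $\pi_s m_s(o)$ for operator $o$. Since every occurrence of an arity-$a$ operator is an occurrence of exactly one operator of that arity, $c_a(G)=\sum_{o:\,\mathrm{ar}(o)=a}c_o(G)$ for every formula $G$. Linearity of the hierarchical expectation $\mathbb E_s$ then gives $m_s(a)=\sum_{o:\,\mathrm{ar}(o)=a}m_s(o)$. The conditional prior is therefore defined as the mixed occurrence mass of $o$ divided by the total mixed occurrence mass at arity $a$. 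This yields the first equality in Equation~\ref{eq:generator_operator_bayes} by definition, and it is well defined whenever $\sum_s\pi_sm_s(a)>0$.

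Next I give the probabilistic reading that justifies the name $\alpha_s(a)$. I define a joint measure on (source, operator occurrence) with mass $\pi_s m_s(o)$; after normalization by the total mass, this is the law of a uniformly chosen weighted occurrence. Conditioning on the event that the occurrence has arity $a$ gives $\Pr(S=s\mid a)=\pi_s m_s(a)/\sum_t\pi_tm_t(a)=\alpha_s(a)$. Conditioning further on the source gives $\Pr(o\mid S=s,a)=m_s(o)/m_s(a)$, valid whenever $m_s(a)>0$. The point is that the total-mass normalizer cancels in both conditionals, exactly as $Z_n$ did before.

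For the second equality, I split the numerator sum according to whether $m_s(a)>0$. If $m_s(a)=0$, then nonnegativity of counts forces $m_s(o)=0$ for every $o$ of arity $a$, so those terms vanish. On the remaining sources I multiply and divide by $m_s(a)$, which produces $\sum_s\alpha_s(a)\,m_s(o)/m_s(a)$. Finally, I check normalization: summing over $o$ of arity $a$ gives $\sum_s\alpha_s(a)\cdot 1=1$, and $\sum_s\alpha_s(a)=1$ by construction.

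The only step needing care is the first one: the identity $m_s(a)=\sum_o m_s(o)$ and the interchange of the hierarchical weights $w(g,p,s)$ with the sum over operators. Both follow because $\mathbb E_s$ is a finite weighted average, but the argument has to note that the operator rewrites applied before counting (inverses, and square-root, square, and cube operators) are applied consistently. That consistency is what makes each occurrence have a well-defined arity.
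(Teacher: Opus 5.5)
Your proposal is correct and follows essentially the same route as the paper: the identity $m_s(a)=\sum_{o\in\mathcal O_a}m_s(o)$, a joint (source, operator) occurrence measure with mass $\pi_s m_s(o)$ whose conditionals give $\alpha_s(a)$ and $m_s(o)/m_s(a)$, and the observation that $m_s(a)=0$ forces $m_s(o)=0$, so those sources drop out. One small imprecision: the source conditional $\Pr(o\mid S=s,a)$ strictly requires $\pi_s m_s(a)>0$, not just $m_s(a)>0$; this is harmless because such terms carry weight $\alpha_s(a)=0$, and your second equality is proved algebraically anyway.
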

\begin{proof}
Let $\mathcal O_a$ be the operators of arity $a$ and $\mu(s,o)=\pi_sm_s(o)$ their weighted occurrence mass. Since $c_a(G)=\sum_{o\in\mathcal O_a}c_o(G)$,
\[
 m_s(a)=\sum_{o\in\mathcal O_a}m_s(o),\qquad
 M_a:=\sum_s\sum_{o\in\mathcal O_a}\mu(s,o)
     =\sum_s\pi_sm_s(a)>0.
\]
For the occurrence source $S$, label $O$, and arity $A$,
\[
 \begin{aligned}
 P(S=s,O=o\mid A=a)&=\frac{\mu(s,o)}{M_a}
                    =\frac{\pi_sm_s(o)}{M_a},\qquad o\in\mathcal O_a,\\
 \sum_s\sum_{o\in\mathcal O_a}P(S=s,O=o\mid A=a)&=\frac{M_a}{M_a}=1,\\
 P(S=s\mid A=a)
 &=\sum_{o\in\mathcal O_a}\frac{\pi_sm_s(o)}{M_a}
   =\frac{\pi_sm_s(a)}{M_a}=\alpha_s(a).
 \end{aligned}
\]
For $\pi_sm_s(a)>0$, conditioning within the source gives
\[
 P(O=o\mid S=s,A=a)
 =\frac{P(S=s,O=o\mid A=a)}{P(S=s\mid A=a)}
 =\frac{\pi_sm_s(o)/M_a}{\pi_sm_s(a)/M_a}
 =\frac{m_s(o)}{m_s(a)}.
\]
Consequently,
\[
 \begin{aligned}
 P(O=o\mid A=a)
 &=\sum_{s:m_s(a)>0}\alpha_s(a)\frac{m_s(o)}{m_s(a)}\\
 &=\sum_{s:m_s(a)>0}
   \frac{\pi_sm_s(a)}{M_a}\frac{m_s(o)}{m_s(a)}
 =\frac{\sum_s\pi_sm_s(o)}{M_a},\\
 \sum_{o\in\mathcal O_a}P(O=o\mid A=a)
 &=\frac{\sum_s\pi_s\sum_{o\in\mathcal O_a}m_s(o)}{M_a}
   =\frac{M_a}{M_a}=1.
 \end{aligned}
\]
Here $m_s(a)=0\Rightarrow m_s(o)=0$ for every $o\in\mathcal O_a$; sources of zero weighted mass contribute zero and require no conditional draw.
\end{proof}
Even with equal source weights $\pi_s=1/3$, the conditional source contribution depends on its weighted arity count. Thus the implementation mixes occurrence counts before dividing, rather than averaging per-formula operator ratios. The unary floor and contextual exclusions are subsequent modifications of this base law.

\begin{table}[!htbp]
\caption{Effective size, variable-count, and root-family proposals before feasibility constraints. The variable and root columns are separate categorical distributions.}
\label{tab:generator_priors}
\centering\small
\begin{tabular}{llr@{\qquad}llr}
\toprule
Quantity & Outcome & Probability & Quantity & Outcome & Probability\\
\midrule
$N$ & 1--2 & 0.001000000 & Variables & 1 & 0.127767302\\
 & 3--7 & 0.030000000 & & 2 & 0.352240995\\
 & 8--15 & 0.321712885 & & 3 & 0.282177813\\
 & 16--30 & 0.440506600 & & 4--5 & 0.196282613\\
 & 31--50 & 0.184595015 & & 6--8 & 0.041531276\\
 & 51--70 & 0.018782916 & & 9--10 & 0\\
 & 71--85 & 0.002822544 & Root & additive & 0.624594682\\
 & 86--100 & 0.000580040 & & multiplicative & 0.101696355\\
 & & & & rational & 0.233553641\\
 & & & & power & 0.000989314\\
 & & & & unary composition & 0.038926974\\
 & & & & leaf & 0.000239034\\
\bottomrule
\end{tabular}
\end{table}

\subsection{Tree shape and operator assignment}
\label{app:generator_operators}
\begin{enumerate}
\item \textbf{Ordered tree shape.}
For $B$ binary operators, there are $C_B=\binom{2B}{B}/(B+1)$ ordered skeletons. Recursively choose left size $i$ and right size $B-1-i$ with probability $C_iC_{B-1-i}/C_B$, giving each skeleton probability $1/C_B$.

\item \textbf{Placement of the remaining operators.}
Across $2B+1$ skeleton positions, distribute $U=N-2B-1$ unary operators using a uniform weak composition $(u_1,\ldots,u_{2B+1})$ summing to $U$; wrap position $j$ in $u_j$ operators. A unary root reserves one operator before distributing $U-1$; a binary root receives zero, with $U$ distributed elsewhere.

\item \textbf{Operator probabilities.}
For an operator $o$ of arity $a$, the empirical weight is
\[
 P(o\mid a)=
 \frac{\sum_{s,p,g}w(g,p,s)c_o(g)}
 {\sum_{s,p,g}w(g,p,s)c_a(g)},
\]
Here $c_o$ counts operator $o$ and $c_a$ counts all arity-$a$ operators. Floor rare unary probabilities at $0.001$ and proportionally rescale the rest (Table~\ref{tab:generator_operators}).

\item \textbf{Root choices and division/inverse requirements.}
An additive root chooses \texttt{Add}/\texttt{Sub} with probabilities
$(0.656837367,0.343162633)$; multiplicative, rational, and power roots force
\texttt{Mul}, \texttt{Div}, and \texttt{Pow}, respectively.
The formula-level division proposal is $0.592371744$ and the inverse proposal is $0.024717228$. A rational root already requires division; otherwise, the probability of requiring it is
\[
 \frac{0.592371744-0.233553641}{1-0.233553641}=0.468158142.
\]
For $N=1$, disable both requirements. A unary root is \texttt{Inv} if required; otherwise exclude \texttt{Inv}. Force any still-required \texttt{Div}/\texttt{Inv} at a uniformly chosen available position; exclude unrequired ones from unassigned nodes. Multiply the \texttt{Pow} weight by $0.15$ for nonleaf exponents. Renormalize remaining weights and apply the checks in Appendix~\ref{app:generator_filter}.
\end{enumerate}

\subsubsection{Theory of the Catalan shape sampler}
\label{app:generator_tree_theory}

Condition on the sampled counts $B=b$ and $U=u$, so that $N=2b+u+1$. We analyze ordered, unlabeled tree shapes; left and right child positions remain distinct even when their eventual operator is commutative.

We use the standard Catalan count $C_b=\binom{2b}{b}/(b+1)$ for ordered full binary skeletons, with $C_0=1$ and $C_b=\sum_{i=0}^{b-1}C_iC_{b-1-i}$ for $b\geq1$~\citep[pp.~6--7]{flajolet2009analytic}.

\begin{proposition}[Uniform sampling of ordered binary skeletons]
\label{prop:generator_catalan_uniform}
At each binary node with a remaining budget $b\geq1$, draw the left-subtree budget $I$ with
\begin{equation}
 P(I=i\mid b)=\frac{C_iC_{b-1-i}}{C_b},
 \qquad i=0,\ldots,b-1,
 \label{eq:generator_catalan_split}
\end{equation}
then draw both subtrees independently by the same rule. Every ordered full binary skeleton with $b$ binary nodes has probability $1/C_b$.
\end{proposition}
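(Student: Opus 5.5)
We argue by strong induction on the binary budget $b$, showing that the recursive sampler induces the uniform law on the set $\mathcal T_b$ of ordered full binary skeletons with $b$ binary nodes. First we check that the split law in Equation~\ref{eq:generator_catalan_split} is a genuine probability distribution. Its weights are nonnegative and sum to one by the Catalan recurrence $C_b=\sum_{i=0}^{b-1}C_iC_{b-1-i}$ quoted just before the statement. Since $C_i\geq1$ for all $i$, every split $i\in\{0,\ldots,b-1\}$ has positive probability.

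\textbf{Base case.} For $b=0$ the skeleton is a single leaf and the sampler returns it deterministically. Its probability is therefore $1=1/C_0$.

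\textbf{Inductive step.} Fix $b\geq1$ and assume the claim for all budgets below $b$. Every $T\in\mathcal T_b$ has a binary root and decomposes uniquely as an ordered pair $(T_L,T_R)$. Here $T_L\in\mathcal T_i$ and $T_R\in\mathcal T_{b-1-i}$, where $i$ is the number of binary nodes in the left subtree. The sampler produces $T$ exactly when it draws $I=i$ and then generates $T_L$ and $T_R$. Because the two subtrees are drawn independently by the same rule, the induction hypothesis gives
\[
 P(T)=\frac{C_iC_{b-1-i}}{C_b}\cdot\frac{1}{C_i}\cdot\frac{1}{C_{b-1-i}}=\frac{1}{C_b}.
\]
Two small points make this complete. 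First, distinct pairs $(i,T_L,T_R)$ yield distinct skeletons, since left and right positions are ordered. Second, every sampler output lies in $\mathcal T_b$, so no probability mass is lost.

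The only real point of care is this uniqueness of the decomposition: the map $T\mapsto(i,T_L,T_R)$ must be a bijection from $\mathcal T_b$ onto $\bigsqcup_i\mathcal T_i\times\mathcal T_{b-1-i}$. This bijection is also what gives the cardinality $|\mathcal T_b|=C_b$. As a consistency check, summing $1/C_b$ over $\mathcal T_b$ then gives total mass one. Termination is immediate, since each recursive call strictly decreases the budget.
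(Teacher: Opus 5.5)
Your proof is correct and follows the paper's argument essentially step for step. Both check that the split law is valid via the Catalan recurrence, note termination, handle the $b=0$ base case, and compute the inductive product $\frac{C_iC_{b-1-i}}{C_b}\cdot\frac{1}{C_i}\cdot\frac{1}{C_{b-1-i}}=\frac{1}{C_b}$, justified by the unique decomposition $T\leftrightarrow(i,T_L,T_R)$ and the independence of the subtree draws, and both close by observing that the masses sum to one over $\mathcal T_b$.
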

\begin{proof}
\textbf{Valid split probabilities.}
The stated Catalan recurrence gives
\[
 P(I=i\mid b)=\frac{C_iC_{b-1-i}}{C_b}\geq0,\qquad
 \sum_{i=0}^{b-1}P(I=i\mid b)
 =\frac{\sum_{i=0}^{b-1}C_iC_{b-1-i}}{C_b}=1.
\]
Each recursive call has fewer binary nodes:
\[
 0\leq I\leq b-1,\qquad 0\leq b-1-I\leq b-1,
\]
so recursion terminates at the unique $b=0$ leaf.

\textbf{Uniformity by induction.}
For $b=0$, $P(\mathrm{leaf}\mid0)=1=1/C_0$. Assume
\[
 P(T'\mid k)=\frac{1}{C_k}
 \qquad\text{for every }0\leq k<b
 \text{ and every skeleton }T'\text{ of size }k.
\]
A skeleton $T$ of size $b$ uniquely determines its ordered subtrees and split:
\[
 T\longleftrightarrow(i,T_L,T_R),\qquad
 |T_L|_{\mathrm{binary}}=i,\quad
 |T_R|_{\mathrm{binary}}=j=b-1-i.
\]
Conditional independence of the recursive draws and the induction hypothesis imply
\[
 \begin{aligned}
 P(T\mid b)
 &=P(I=i\mid b)P(T_L,T_R\mid I=i,b)\\
 &=P(I=i\mid b)P(T_L\mid i)P(T_R\mid j)\\
 &=\frac{C_iC_j}{C_b}\frac{1}{C_i}\frac{1}{C_j}
 =\frac{1}{C_b}.
 \end{aligned}
\]
There is no second construction path for the same ordered tree. For the set $\mathcal S_b$ of all such skeletons,
\[
 \sum_{T\in\mathcal S_b}P(T\mid b)
 =|\mathcal S_b|\,\frac{1}{C_b}=1.
\]
\end{proof}

\begin{proposition}[Uniform unary--binary shapes with root constraints]
\label{prop:generator_unary_uniform}
Fix $b,u\geq0$ and index the $K=2b+1$ skeleton positions in preorder, with position one at the root. Independently draw a uniform skeleton and a uniform nonnegative integer vector $(u_1,\ldots,u_K)$ summing to $u$, then insert $u_j$ unary nodes above position $j$. Every ordered shape with these arity counts has probability
\begin{equation}
 P(T\mid b,u)=\frac{1}{C_bW(u,K)},\qquad
 W(u,K)=\binom{u+K-1}{K-1}.
 \label{eq:generator_shape_probability}
\end{equation}
A prescribed unary root restricts the vector to $u_1\geq1$; a prescribed binary root requires $u_1=0$ and $b\geq1$. The respective counts are
\begin{equation}
 W_{\mathrm{unary}}=\binom{u+K-2}{K-1}\quad(u\geq1),\qquad
 W_{\mathrm{binary}}=\binom{u+K-2}{K-2}\quad(b\geq1).
 \label{eq:generator_root_compositions}
\end{equation}
A uniform draw among the admissible vectors assigns probability $1/(C_bW_r)$ to each shape with root arity $r$.
\end{proposition}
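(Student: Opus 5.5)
The plan is to prove that the insertion map is a bijection, building on Proposition~\ref{prop:generator_catalan_uniform}. The map in question is
\[
\Phi:(S,(u_1,\ldots,u_K))\mapsto T,
\]
sending a pair (ordered full binary skeleton with $b$ binary nodes, weak composition of $u$ into $K=2b+1$ parts) to an ordered unary--binary shape with $b$ binary and $u$ unary nodes. Since the two draws are independent, each pair has probability $\frac{1}{C_b}\cdot\frac{1}{W(u,K)}$. Here $W(u,K)=\binom{u+K-1}{K-1}$ is the stars-and-bars count of weak compositions. Once bijectivity is established, every shape inherits exactly this probability, which gives Equation~\ref{eq:generator_shape_probability}.

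\textbf{Bijectivity.} For surjectivity and injectivity I will construct the inverse explicitly. Given a shape $T$, contract every maximal chain of unary nodes. Each such chain sits directly above a unique node of arity $0$ or $2$, or above the root position. The contraction yields a full binary skeleton $S$ whose $2b+1$ nodes are exactly the non-unary nodes of $T$. Set $u_j$ to the length of the unary chain immediately above the $j$-th node of $S$ in preorder. The $u_j$ are nonnegative and sum to $u$. Reinserting these chains recovers $T$, so this is the inverse of $\Phi$.

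The step I expect to need the most care is checking that the preorder indexing is well defined and unambiguous. Preorder of $S$ is determined by $S$ alone, and the chain above each skeleton node is determined by $T$. Hence two different pairs cannot produce the same $T$: either the contracted skeletons differ, or some chain length differs. I will argue this by induction on $b+u$, peeling off the root. If the root of $T$ is unary, then $u_1\geq1$, and removing that node corresponds to decrementing $u_1$. If the root is binary, then $u_1=0$, and the subtrees split the remaining composition entries contiguously in preorder.

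\textbf{Root constraints.} A unary root in $T$ means precisely that $u_1\geq1$. Substituting $u_1'=u_1-1$ gives a bijection with weak compositions of $u-1$ into $K$ parts, so
\[
W_{\mathrm{unary}}=\binom{u+K-2}{K-1}.
\]
A binary root means $u_1=0$ and that the skeleton root is internal, i.e.\ $b\geq1$. The remaining $K-1$ entries then form a weak composition of $u$, so
\[
W_{\mathrm{binary}}=\binom{u+K-2}{K-2}.
\]
Restricting $\Phi$ to these admissible sets is still a bijection onto the shapes with the prescribed root arity $r$. The uniform draw over admissible vectors, independent of the uniform skeleton, therefore assigns each such shape probability $1/(C_bW_r)$.
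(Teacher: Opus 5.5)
Your proposal is correct and follows the paper's proof essentially step for step. You show that the insertion map $\Phi$ is a bijection by contracting maximal unary chains and reading their lengths off in preorder, let every shape inherit the pair probability $1/(C_bW(u,K))$, and handle the root constraints with the same shift $u_1\mapsto u_1-1$ and the same deletion of the forced entry $u_1=0$. The only differences are cosmetic: the paper proves the stars-and-bars count through an explicit separator-position bijection and tabulates the boundary cases $b=0$ or $u=0$, whereas you cite stars-and-bars directly and add an optional induction on $b+u$ for injectivity.
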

\begin{proof}
\textbf{One-to-one shape construction.}
Write
\[
 \mathcal U_{u,K}
 =\left\{(u_1,\ldots,u_K)\in\mathbb Z_{\geq0}^{K}:
          \sum_{j=1}^{K}u_j=u\right\},
\]
and let $\mathcal T_{b,u}$ be the target ordered shapes. Insertion defines
\[
 \Phi:\mathcal S_b\times\mathcal U_{u,K}\longrightarrow\mathcal T_{b,u},
 \qquad
 \Phi(S,\mathbf u)=\operatorname{insert}(S,\mathbf u).
\]
Every maximal unary chain lies above a unique surviving skeleton node, including the root. Contracting all such chains and recording their lengths in preorder therefore gives
\[
 \Phi^{-1}(T)=
 \bigl(\operatorname{contract}(T),\operatorname{chainlengths}(T)\bigr),
 \qquad
 \Phi^{-1}\circ\Phi=\mathrm{id},\quad
 \Phi\circ\Phi^{-1}=\mathrm{id}.
\]
Both operations preserve child order and the prescribed node counts.

\textbf{Uniform unary allocation.}
For $K>1$, the composition sampler selects $K-1$ separator positions uniformly:
\[
 1\leq t_1<\cdots<t_{K-1}\leq u+K-1.
\]
The forward and inverse maps are
\[
 \begin{aligned}
 u_1&=t_1-1,\qquad
 u_j=t_j-t_{j-1}-1\quad(2\leq j<K),\\
 u_K&=u+K-1-t_{K-1},\qquad
 t_j=j+\sum_{\ell=1}^{j}u_\ell\quad(1\leq j<K).
 \end{aligned}
\]
Hence
\[
 \begin{aligned}
 u_j&\geq0,\qquad \sum_{j=1}^{K}u_j=u,\\
 |\mathcal U_{u,K}|&=\binom{u+K-1}{K-1}=W(u,K),\qquad
 P(\mathbf u\mid u,K)=\frac{1}{W(u,K)}.
 \end{aligned}
\]
For $K=1$, $\mathcal U_{u,1}=\{(u)\}$ and $W(u,1)=1$. By the bijection and independence,
\[
 \begin{aligned}
 |\mathcal T_{b,u}|&=|\mathcal S_b|\,|\mathcal U_{u,K}|=C_bW(u,K),\\
 P(T\mid b,u)&=P(S\mid b)P(\mathbf u\mid u,K)
              =\frac{1}{C_b}\frac{1}{W(u,K)},\\
 \sum_{T\in\mathcal T_{b,u}}P(T\mid b,u)&=1.
 \end{aligned}
\]

\textbf{Conditioning on the root arity.}
For a unary root ($u\geq1$), reserving one root node gives the bijection
\[
 \begin{aligned}
 \{\mathbf u\in\mathcal U_{u,K}:u_1\geq1\}
 &\longleftrightarrow\mathcal U_{u-1,K},\\
 (u_1,u_2,\ldots,u_K)&\longmapsto(u_1-1,u_2,\ldots,u_K),\\
 W_{\mathrm{unary}}
 &=W(u-1,K)=\binom{u+K-2}{K-1}.
 \end{aligned}
\]
For a binary root ($b\geq1$), excluding the root position gives
\[
 \begin{aligned}
 \{\mathbf u\in\mathcal U_{u,K}:u_1=0\}
 &\longleftrightarrow\mathcal U_{u,K-1},\\
 (0,u_2,\ldots,u_K)&\longmapsto(u_2,\ldots,u_K),\\
 W_{\mathrm{binary}}
 &=W(u,K-1)=\binom{u+K-2}{K-2}.
 \end{aligned}
\]
The inverses add one to the first entry or prepend zero, respectively. Let $\mathcal T_{b,u,r}$ contain the shapes with root arity $r$. Each admissible vector has probability $1/W_r$, independently of the skeleton:
\[
 |\mathcal T_{b,u,r}|=C_bW_r,\qquad
 P(T\mid b,u,r)=\frac{1}{C_bW_r},\qquad
 \sum_{T\in\mathcal T_{b,u,r}}P(T\mid b,u,r)=1.
\]
The feasible boundary cases are
\[
 \begin{array}{c|c|c}
 (b,u)&\text{admissible allocation}&\text{result}\\ \hline
 b=0,\ u>0&(u)&\text{one unary chain}\\
 b>0,\ u=0&(0,\ldots,0)&\text{unchanged binary skeleton}\\
 b=u=0&(0)&\text{one leaf}.
 \end{array}
\]
A unary root with $u=0$ or a binary root with $b=0$ is infeasible and excluded.
\end{proof}

These statements characterize the raw, unlabeled shape proposal. The empirical draw of $(N,B)$ controls how often different arity classes occur; Catalan and composition weights make the draw uniform within the selected class and root constraint. Subsequent operator/leaf labels, simplification, and acceptance checks determine the distribution of the final formulas.

\begin{table}[!htbp]
\caption{Base operator probabilities after the unary floor and before forced assignments or contextual exclusions. \texttt{Neg}, \texttt{Inv}, \texttt{Pow2}, and \texttt{Pow3} denote negation, reciprocal, square, and cube.}
\label{tab:generator_operators}
\centering\small
\begin{tabular}{lr@{\qquad\qquad}lr}
\toprule
Binary operator & Probability & Unary operator & Probability\\
\midrule
\texttt{Add} & 0.219895438 & \texttt{Neg} & 0.034145627\\
\texttt{Sub} & 0.114883686 & \texttt{Inv} & 0.015434137\\
\texttt{Mul} & 0.509631078 & \texttt{Abs} & 0.012666583\\
\texttt{Div} & 0.135253958 & \texttt{Sqrt} & 0.117837361\\
\texttt{Pow} & 0.020335839 & \texttt{Pow2} & 0.278049997\\
 & & \texttt{Pow3} & 0.073586740\\
 & & \texttt{Exp} & 0.110216614\\
 & & \texttt{Log} & 0.032029858\\
 & & \texttt{Sin} & 0.200413219\\
 & & \texttt{Cos} & 0.113221225\\
 & & \texttt{Tan}, \texttt{Arcsin}, \texttt{Arccos} & 0.001000000 each\\
 & & \texttt{Arctan}, \texttt{Sinh}, \texttt{Cosh} & 0.001000000 each\\
 & & \texttt{Tanh} & 0.003398639\\
 & & \texttt{Cot}, \texttt{Sec}, \texttt{Csc} & 0.001000000 each\\
\bottomrule
\end{tabular}
\end{table}

\subsection{Variables, constants, and exponents}
\begin{enumerate}
\item \textbf{Variable coverage and ordinary leaves.}
Reserve exponent leaves, shuffle ordinary leaves, and place each of the $v$ variables once. Each remaining leaf is a constant with probability $0.702974962$, or a variable drawn uniformly from $\{x_1,\ldots,x_v\}$.

\item \textbf{Constant magnitudes and signs.}
Select a bin from Table~\ref{tab:generator_constants}, returning zero or sampling $\log_{10}|c|$ uniformly. With probability $0.151283414$, replace the nonzero draw by a uniformly chosen matching value from $\{0.5,1,2,e,\pi\}$, if available. Positive and negative signs have probabilities $0.667642615$ and $0.332357385$, respectively. Exclude zero for leaves directly under \texttt{Mul}, \texttt{Div}, \texttt{Pow}, \texttt{Inv}, or \texttt{Log}.

\item \textbf{General-power exponents.}
For a \texttt{Pow} leaf exponent, draw uniformly from $\{4,5,6\}$ with probability $0.536287857$, uniformly from $\{0.25,0.75,1.25,1.5,2.5,3.5\}$ with probability $0.062566906$, or from $\mathcal U(0.1,4)$ with probability $0.401145236$. Negative exponents and aliases $-1,0.5,2,3$ are excluded; reciprocal, square-root, square, and cube forms use unary operators.
\end{enumerate}

\begin{table}[!htbp]
\caption{Constant-magnitude proposals. The smallest nonzero bin has the finite sampling range $[10^{-100},10^{-12})$. The original mass above $10^6$ is redistributed proportionally between the two smallest nonzero bins.}
\label{tab:generator_constants}
\centering\small
\begin{tabular}{lr@{\qquad\qquad}lr}
\toprule
$|c|$ interval & Probability & $|c|$ interval & Probability\\
\midrule
$0$ & 0.000248984 & $[10^{-1},1)$ & 0.297092815\\
$(0,10^{-12})$ & 0.037271548 & $[1,10)$ & 0.313650602\\
$[10^{-12},10^{-6})$ & 0.062787186 & $[10,10^3)$ & 0.059584268\\
$[10^{-6},10^{-3})$ & 0.046988624 & $[10^3,10^6)$ & 0.011093352\\
$[10^{-3},10^{-1})$ & 0.171282619 & $[10^6,\infty)$ & 0\\
\bottomrule
\end{tabular}
\end{table}

\subsection{Simplification and acceptance checks}
\label{app:generator_filter}
\begin{enumerate}
\item \textbf{Processing order.}
Reparse serialized expressions, fold numerical subtrees bottom-up, and apply the whitelist until stable, rejecting after 16 unsuccessful passes. Process targets before pairing, random candidates after variable mapping, and edited candidates after editing, all before TED labeling.

\item \textbf{Numerical subtrees and domains.}
Require real, finite scalar constants with $|c|<10^6$. Reject zero denominators, \texttt{Inv}(0), $\log(c)$ for $c\leq0$, $\sqrt c$ for $c<0$, inverse sine/cosine for $|c|>1$, $0^c$ for $c\leq0$, and negative constant bases with noninteger powers. Folded fitted constants remain fitted; rewrite-introduced structural constants are not fitted.

\item \textbf{Whitelist of algebraic rewrites.}
The implemented whitelist contains the following groups:
\begin{itemize}
\item \emph{Identities and signs:} remove additive zeros, subtractive zero, multiplicative zero or one, and division by one; map $0-x$ to $-x$, multiplication or division by $-1$ to negation, $x-x$ to $0$, $x+x$ to $2x$, and $x x$ to $\mathrm{Pow2}(x)$. Remove double negation and double inversion, collapse repeated absolute value, and simplify negative numerical literals.
\item \emph{Collection of terms and factors:} flatten addition/multiplication components and combine constants, signed coefficients of like terms, and repeated factors. Group modulo associativity and commutativity, retaining first-occurrence order. Add positive integer exponents of repeated bases, without distribution or general fraction cancellation.
\item \emph{Powers and absolute values:} apply $x^0\mapsto1$, $x^1\mapsto x$, $1^x\mapsto1$, and aliases $(-1,0.5,2,3)\mapsto(\mathrm{Inv},\mathrm{Sqrt},\mathrm{Pow2},\mathrm{Pow3})$. Combine nested positive integer powers by multiplying exponents; map $\sqrt{x^2}$ to $|x|$, $|-x|$ to $|x|$, $|x^{2k}|$ to $x^{2k}$ for positive integer $k$, and $(-x)^2$ or $|x|^2$ to $x^2$.
\end{itemize}
The whitelist excludes general distribution, factorization, rational cancellation, trigonometric identities, exponential/logarithmic cancellation, and unrestricted real-power identities.

\item \textbf{Operator-nesting restrictions.}
The trigonometric family includes \texttt{Sin}, \texttt{Cos}, \texttt{Tan}, \texttt{Cot}, \texttt{Sec}, \texttt{Csc}, \texttt{Arcsin}, \texttt{Arccos}, \texttt{Arctan}, \texttt{Sinh}, \texttt{Cosh}, and \texttt{Tanh}. After folding, reject nested family members or any ancestor--descendant combination with \texttt{Exp}; also reject direct nesting within $\{\mathrm{Exp},\mathrm{Log}\}$ or $\{\mathrm{Pow},\mathrm{Pow2},\mathrm{Pow3},\mathrm{Sqrt}\}$. General powers require finite, positive numerical leaf exponents.

\item \textbf{Formula and retry limits.}
Require a variable, at most 100 nodes and 512 prefix tokens, candidate variables drawn from the target, and numerical validity. Retry limits are 200 structural attempts, 200 tokenization calls, 100 numerical-validity calls, and 100 pair attempts with fixed source/edit family. Archived pair retries may redraw target length; the current generator instead holds raw length fixed for up to 200 filtered attempts. Simplification can shorten either output.
\end{enumerate}

\subsection{Target--candidate construction and local edits}
\label{app:generator_pairs}
\begin{enumerate}
\item \textbf{Independent and edited pairs.}
Random pairs sample target and candidate independently; edited pairs modify a target copy. Map candidate and donor variables consistently into the target set, uniformly without replacement when possible and with replacement otherwise. Finite generation requests a 50:50 source split up to rounding; continuous pools are generated separately. Training uses the mixture in Appendix~\ref{app:generator_sampling}.

\item \textbf{Edit family, count, and total budget.}
Draw family/count proposals from Table~\ref{tab:edit_proposals}, truncate counts using Table~\ref{tab:edit_limits}, and renormalize. For target length $L$, total affected-node charges are capped at $\max(2,\lceil rL\rceil)$ using the listed $r$. Charges are one for point/unary edits, branch size plus connecting operator for branch edits, the larger subtree size for replacement, and removed nodes for promotion. These charges differ from TED.

\item \textbf{Location selection.}
Choose uniformly among eligible locations. With probability $0.85$, restrict to paths disjoint from earlier edits when available; otherwise use all eligible locations.
\end{enumerate}

\begin{table}[!htbp]
\caption{Edit-family probabilities and count proposals conditional on an edited pair. Short-target caps are applied after these proposals.}
\label{tab:edit_proposals}
\centering\small
\begin{tabular}{lrl}
\toprule
Family & Probability & Count values: probabilities\\
\midrule
Point relabeling & 0.20 & $1,2,3,4$: $(0.15,0.40,0.30,0.15)$\\
Unary insertion/deletion & 0.15 & $1,2,3,4$: $(0.20,0.40,0.25,0.15)$\\
Branch insertion/deletion & 0.05 & $1,2,3$: $(0.25,0.45,0.30)$\\
Subtree replacement & 0.15 & $1,2,3$: $(0.25,0.45,0.30)$\\
Crossover/promotion & 0.05 & $1,2$: $(0.60,0.40)$\\
Mixed families & 0.40 & $2,3,4$: $(0.10,0.30,0.60)$\\
\bottomrule
\end{tabular}
\end{table}

\begin{table}[!htbp]
\caption{Edit-count caps by target length $L$ and cumulative affected-node budget ratio $r$. Each piecewise list gives the largest permitted count.}
\label{tab:edit_limits}
\centering\small
\begin{tabular}{l p{0.64\linewidth} r}
\toprule
Family & Maximum count & $r$\\
\midrule
Point / unary & $1$ for $L\leq5$; $2$ for $6\leq L\leq20$; $3$ for $21\leq L\leq50$; $4$ otherwise & 0.30\\
Branch / subtree & $1$ for $L\leq7$; $2$ for $8\leq L\leq30$; $3$ otherwise & 0.35\\
Crossover/promotion & $1$ for $L\leq20$; $2$ otherwise & 0.20\\
Mixed & $2$ for $L\leq5$; $3$ for $6\leq L\leq15$; $4$ otherwise & 0.40\\
\bottomrule
\end{tabular}
\end{table}

\noindent\textbf{Operations within each family.}
These probabilities are conditional on the family. Operators follow Table~\ref{tab:generator_operators}, excluding the original operator for relabeling.

\begin{enumerate}
\item \textbf{Point relabeling.}
Choose numerical-value, variable, unary-operator, binary-operator, or variable-to-constant replacement with probabilities $(0,0.10,0.40,0.45,0.05)$. Variables change uniformly to another target variable. Operator changes preserve children and arity and require a variable-containing subtree. Variable-to-constant edits use the ordinary constant sampler and must retain a variable overall.

\item \textbf{Unary insertion or deletion.}
With equal probability $0.5$, wrap a subtree in a sampled unary operator or remove a unary operator while retaining its variable-containing child.

\item \textbf{Branch insertion or deletion.}
Choose insertion/deletion with probability $0.5$ each. For current size $\ell$, define $c_{10}(\ell)=\max(1,\min(10,\lceil0.10\ell\rceil))$. Insertion joins a subtree and a donor of at most $c_{10}(\ell)$ nodes under a sampled binary operator, with equally likely child order. Restrict donors to non-root subtrees with probability $0.70$. Deletion retains a variable-containing child and removes a sibling of at most $c_{10}(\ell)$ nodes. Enforce the 100-node limit.

\item \textbf{Subtree replacement.}
Select a subtree of size $s\leq c_{15}(\ell)$, with $c_{15}(\ell)=\max(1,\min(8,\lceil0.15\ell\rceil))$, excluding the root unless the tree has one node. Matched, shorter, and longer donors have probabilities $(0.70,0.05,0.25)$ and size intervals
\[
 \begin{split}
 &[\max(1,\lfloor0.8s\rfloor),\ \max(1,\lfloor0.8s\rfloor,\lceil1.2s\rceil)],\\
 &[1,\ \max(1,s-1)],\\
 &[s+1,\ \min(c_{15}(\ell)+2,s+4)].
 \end{split}
\]
Cap upper endpoints at $100-(\ell-s)$. Choose uniformly among eligible subtrees of a sampled donor formula, restricting to non-root donors with probability $0.70$. Both subtrees must contain a variable; reject identical replacements.

\item \textbf{Crossover or promotion.}
Crossover (probability $0.80$) uses the preceding cap and intervals, size-mode probabilities $(0.80,0.10,0.10)$, and non-root donors. Promotion (probability $0.20$) uniformly chooses an eligible ancestor--descendant pair, retains the variable-containing descendant, and removes 1 to $\max(1,\lceil0.20\ell\rceil)$ nodes. Apply the same disjoint-path preference.

\item \textbf{Mixed operations.}
For $m=2$, select two distinct families; for $m=3$, select two/three with probabilities $(0.75,0.25)$; for $m=4$, select two/three/four with probabilities $(0.60,0.30,0.10)$. Sample without replacement using point, unary, branch, subtree, and crossover/promotion weights $(0.30,0.25,0.20,0.20,0.05)$. Fill remaining slots with those weights, allowing repeats except for subtree and crossover/promotion, then shuffle.

\item \textbf{Retries, final validation, and labels.}
Allow 30 sequence attempts and 10 attempts per operation. Proposal routines allow 12 point, 8 unary, 12 branch, and 20 subtree trials; donor search allows 40 formulas. Retry failed pairs with the same edit family. Reject candidates that normalize back to the target, lose variables, violate constraints, or fail numerical validation. Then compute normalized multi-view TED (Appendix~\ref{app:distance}), rejecting nonfinite labels without targeting a distance interval. Command defaults allow a 40-second TED deadline, 200-second hard watchdog, and five failed task attempts before skipping.
\end{enumerate}

\subsection{Numerical observation sampling}
\label{app:generator_observations}
The archived run uses the base mixture sampler adapted from MDLformer. Let $v$ denote the number of target variables and $M$ the requested point count.
\begin{enumerate}
\item \textbf{Mixture parameters.}
Draw $C$ uniformly from $\{1,\ldots,9\}$, weights $a_k\sim\mathcal U(0,1)$, and $\rho_k=a_k/\sum_j a_j$. Each component draws $\boldsymbol\mu_k\sim\mathcal N(0,I_v)$, $c_{kj}\sim\mathcal U(0,1)$, and a random special-orthogonal rotation $R_k$ for $v>1$ (identity for $v=1$). Allocate points by $\operatorname{Multinomial}(M;\boldsymbol\rho)$.

\item \textbf{Component type and coordinates.}
Independently for each component, choose Gaussian or uniform sampling with probability $0.5$ each:
\[
 \mathbf x=\mathbf zR_k,\qquad
 \begin{cases}
 \mathbf z\sim\mathcal N(\boldsymbol\mu_k,\operatorname{diag}(\mathbf c_k)),
     &\text{Gaussian},\\
 z_j=\mu_{kj}+\sqrt{c_{kj}}\,u_j,\quad u_j\sim\mathcal U(-1,1),
     &\text{uniform}.
 \end{cases}
\]
Gaussian and uniform coordinate variances are $c_{kj}$ and $c_{kj}/3$, respectively. Rotate means and offsets together. The archived sampler leaves $X$ unstandardized and uses no affine augmentation.

\item \textbf{Evaluation, filtering, and repeated draws.}
Evaluate the target and discard nonfinite outputs or $|y|>9.999\times10^{100}$. Draw up to five fresh mixture batches until $M$ valid rows are available, then shuffle and retain $M$. Table construction allows 20 such calls and requires finite inputs/outputs. Failed stratified requests retry up to 32 rows from the same source/bin. Pair-generation checks use $M=256$.

\item \textbf{Inputs supplied to the verifier.}
Use 256 table rows, pad unused variable columns to 10, and normalize outputs by
\[
 \widetilde y_i=(y_i-\overline y)/(\operatorname{std}(y)+10^{-6}).
\]
Compute correlation summaries from 2,048 valid observations and use their first 256 shuffled rows for the table encoder. Each transformed correlation requires three valid points; mask invalid summaries (Appendix~\ref{app:training}). Observations come from the target; symbolic inputs come from the candidate.
\end{enumerate}

\subsection{TED-balanced training draws and validation}
\label{app:generator_sampling}
\begin{enumerate}
\item \textbf{Distance coverage and source probabilities.}
Draw each of 12 TED bins with probability $1/12$. Given bin $i$, choose edited or random pairs with probabilities $q_i$ and $1-q_i$, respectively (Table~\ref{tab:generator_ted_sampling}):
\[
 P(i,\mathrm{edit})=\frac{q_i}{12},\qquad
 P(i,\mathrm{random})=\frac{1-q_i}{12}.
\]
Marginal source probabilities are $0.45$ edited and $0.55$ random. Equal bin mass leaves within-bin distances distributed as in the pool; the last two bins are wider. The main text's $0.6$ boundary indicates the predominant source.

\item \textbf{How the mixture weights were fixed.}
The August 28 calibration scan contained 18,343,472 edited and 52,541,662 random rows, with valid TED labels throughout. For the four mixed-source bins, use counts $E_i,R_i$ to set
\[
 q_i=\frac{\lambda E_i}{\lambda E_i+R_i},
 \qquad \lambda=3.0244780430884326.
\]
Other bins use the table's zero/one choices. These calibration counts precede the final pool snapshot.

\item \textbf{Row selection, batches, and reproducibility.}
Draw bin/source requests independently. Traverse each source/bin by deterministic permutations without replacement, restarting when exhausted. Seed 42 and update number fix the global plan; replay category draws on resume. Use four workers, 32 examples per microbatch, and two accumulation rounds (global batch 256). Expected counts are $256/12$ per bin, 115.2 edited, and 140.8 random. Contrastive pretraining instead samples sources equally.

\item \textbf{Reserved rows and fixed validation set.}
Reserve the first 32 rows per source/bin from stratified training. Fixed validation contains 32 pairs per bin: 384 total, comprising 173 edited and 211 random pairs.
\end{enumerate}

\paragraph{Bayesian interpretation of the source choice.}
Let $Z\in\{\mathrm{edit},\mathrm{random}\}$ be the pair source and $I\in\{1,\ldots,12\}$ its TED bin. Let $E=\sum_iE_i>0$ and $R=\sum_iR_i>0$ be the calibration source totals. The empirical bin likelihoods are $P(I=i\mid Z=\mathrm{edit})=E_i/E$ and $P(I=i\mid Z=\mathrm{random})=R_i/R$.

\begin{proposition}[Bayesian source sampling and calibration across TED bins]
\label{prop:generator_ted_calibration}
For an edited-source prior $P(Z=\mathrm{edit})=\rho\in(0,1)$ and any bin of positive mixture probability, Bayes' rule gives
\begin{equation}
 P(\mathrm{edit}\mid i)
 =\frac{\rho E_i/E}{\rho E_i/E+(1-\rho)R_i/R}
 =\frac{\lambda E_i}{\lambda E_i+R_i},
 \qquad \lambda=\frac{\rho R}{(1-\rho)E}.
\label{eq:generator_ted_bayes}
\end{equation}
Conversely, any $\lambda>0$ corresponds to $\rho=\lambda E/(\lambda E+R)$.

Set $q_i=1$ for $i=1,2,3$, use this posterior for $i=4,5,6,7$, and set $q_i=0$ for $i=8,\ldots,12$. Drawing $I$ uniformly and then the source with edited probability $q_I$ produces
\begin{equation}
 \begin{aligned}
 Q(I=i)&=\frac{1}{12},\\
 Q(Z=\mathrm{edit})&=F(\lambda)
 =\frac{1}{12}\left(3+\sum_{i=4}^{7}\frac{\lambda E_i}{\lambda E_i+R_i}\right).
 \end{aligned}
\label{eq:generator_ted_marginal}
\end{equation}
If $E_i,R_i>0$ in the four mixed bins, every edited marginal in $(3/12,7/12)$, including $0.45$, determines a unique $\lambda>0$.
\end{proposition}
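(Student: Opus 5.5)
The statement has three parts, and I would prove them in order: the Bayes posterior with its reparametrization by $\lambda$, the marginals of the two-stage sampler, and existence and uniqueness of $\lambda$ for a prescribed edited marginal.

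\textbf{Step 1: the posterior.} Let $Z$ be the source with prior $P(Z=\mathrm{edit})=\rho$, and let the bin likelihoods be $E_i/E$ and $R_i/R$. Bayes' rule gives the first expression in \eqref{eq:generator_ted_bayes}, provided the denominator (the mixture probability of bin $i$) is positive, as assumed.

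To reach the $\lambda$ form, I would multiply numerator and denominator by $R/(1-\rho)$. This gives $\lambda E_i/(\lambda E_i+R_i)$ with $\lambda=\rho R/((1-\rho)E)$.

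For the converse, the map $\rho\mapsto\lambda$ is a strictly increasing bijection from $(0,1)$ onto $(0,\infty)$. Solving $\lambda(1-\rho)E=\rho R$ for $\rho$ yields $\rho=\lambda E/(\lambda E+R)\in(0,1)$, which is the stated inverse.

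\textbf{Step 2: the marginals.} Under the two-stage sampler, $Q(I=i,Z=\mathrm{edit})=q_i/12$ and $Q(I=i,Z=\mathrm{random})=(1-q_i)/12$. Summing over the source gives $Q(I=i)=1/12$. Summing over bins gives $Q(Z=\mathrm{edit})=\frac{1}{12}\sum_i q_i$. Substituting $q_i=1$ for $i\le3$, the posterior for $i=4,\dots,7$, and $q_i=0$ for $i\ge 8$ yields $F(\lambda)$ as written in \eqref{eq:generator_ted_marginal}.

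\textbf{Step 3: existence and uniqueness of $\lambda$.} Assume $E_i,R_i>0$ for $i=4,\dots,7$. Each summand $g_i(\lambda)=\lambda E_i/(\lambda E_i+R_i)$ is continuous on $(0,\infty)$. Its derivative is
\[
g_i'(\lambda)=\frac{E_iR_i}{(\lambda E_i+R_i)^2}>0,
\]
so each summand is strictly increasing. Each summand also tends to $0$ as $\lambda\to0^+$ and to $1$ as $\lambda\to\infty$.

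It follows that $F$ is continuous and strictly increasing, with limits $3/12$ at $0^+$ and $7/12$ at $\infty$. By the intermediate value theorem, $F$ maps $(0,\infty)$ onto $(3/12,7/12)$, and strict monotonicity gives injectivity. Since $0.45=5.4/12$ lies in this interval, it determines a unique $\lambda$.

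Positivity of both $E_i$ and $R_i$ is essential here. If $E_i=0$ the summand is identically $0$, and if $R_i=0$ it is identically $1$. Either degeneracy would shrink the range of $F$, and strict monotonicity would fail if all four mixed bins were degenerate.

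The main obstacle is only bookkeeping: verifying the open-interval range through the limits and monotonicity of $F$. None of the steps requires deep machinery.
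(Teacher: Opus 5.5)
Your proposal is correct and follows essentially the same route as the paper's proof. Both compute the Bayes posterior, rescale it to the $\lambda$ form with the inverse $\rho=\lambda E/(\lambda E+R)$, and obtain the marginals by summing the joint masses $q_i/12$ and $(1-q_i)/12$; both then get existence and uniqueness from continuity, the positive derivative $E_iR_i/(\lambda E_i+R_i)^2$, the limits $3/12$ and $7/12$, the intermediate value theorem, and strict monotonicity.
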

\begin{proof}
\textbf{Posterior source law.}
For a bin of positive mixture mass, the joint and marginal probabilities are
\[
 \begin{aligned}
 P(Z=\mathrm{edit},I=i)&=\rho E_i/E,\\
 P(Z=\mathrm{random},I=i)&=(1-\rho)R_i/R,\\
 P(I=i)&=\rho E_i/E+(1-\rho)R_i/R>0.
 \end{aligned}
\]
Therefore
\[
 \begin{aligned}
 P(\mathrm{edit}\mid i)
 &=\frac{P(Z=\mathrm{edit},I=i)}{P(I=i)}
   =\frac{\rho E_i/E}{\rho E_i/E+(1-\rho)R_i/R}\\
 &=\frac{[\rho R/((1-\rho)E)]E_i}
         {[\rho R/((1-\rho)E)]E_i+R_i}
   =\frac{\lambda E_i}{\lambda E_i+R_i},\\
 P(\mathrm{random}\mid i)&=\frac{R_i}{\lambda E_i+R_i},\qquad
 P(\mathrm{edit}\mid i)+P(\mathrm{random}\mid i)=1.
 \end{aligned}
\]
The prior--multiplier correspondence is
\[
 \lambda=\frac{\rho R}{(1-\rho)E}
 \ \Longleftrightarrow\
 \lambda E=\rho(\lambda E+R)
 \ \Longleftrightarrow\
 \rho=\frac{\lambda E}{\lambda E+R}\in(0,1),
 \qquad \lambda>0.
\]
In particular, the source-prior odds are $\rho/(1-\rho)=\lambda E/R$.

\textbf{Training marginals.}
Under uniform bin sampling and the prescribed $q_i$,
\[
 \begin{aligned}
 Q(I=i,Z=\mathrm{edit})&=\frac{q_i}{12},&
 Q(I=i,Z=\mathrm{random})&=\frac{1-q_i}{12},\\
 Q(I=i)&=\frac{q_i+(1-q_i)}{12}=\frac{1}{12},&
 \sum_{i=1}^{12}Q(I=i)&=1.
 \end{aligned}
\]
Thus
\[
 \begin{aligned}
 Q(Z=\mathrm{edit})
 &=\sum_{i=1}^{12}\frac{q_i}{12}\\
 &=\frac{1}{12}
   \left(\underbrace{\sum_{i=1}^{3}1}_{3}
   +\sum_{i=4}^{7}\frac{\lambda E_i}{\lambda E_i+R_i}
   +\underbrace{\sum_{i=8}^{12}0}_{0}\right)
 =F(\lambda).
 \end{aligned}
\]
The zero/one choices are imposed source restrictions; the Bayesian posterior is used only in the four mixed bins.

\textbf{Existence and uniqueness of calibration.}
With $E_i,R_i>0$ for $i=4,\ldots,7$,
\[
 \begin{aligned}
 \frac{d}{d\lambda}\frac{\lambda E_i}{\lambda E_i+R_i}
 &=\frac{E_i(\lambda E_i+R_i)-\lambda E_i^2}
         {(\lambda E_i+R_i)^2}
   =\frac{E_iR_i}{(\lambda E_i+R_i)^2}>0,\\
 F'(\lambda)&=\frac{1}{12}\sum_{i=4}^{7}
   \frac{E_iR_i}{(\lambda E_i+R_i)^2}>0.
 \end{aligned}
\]
Each denominator is positive on $(0,\infty)$, so $F$ is continuous there, and
\[
 \begin{aligned}
 \lim_{\lambda\downarrow0}\frac{\lambda E_i}{\lambda E_i+R_i}
 &=0,&
 \lim_{\lambda\to\infty}\frac{\lambda E_i}{\lambda E_i+R_i}
 &=1,\\
 \lim_{\lambda\downarrow0}F(\lambda)&=\frac{3}{12},&
 \lim_{\lambda\to\infty}F(\lambda)&=\frac{7}{12}.
 \end{aligned}
\]
For any target $\tau\in(3/12,7/12)$, these limits provide
\[
 0<\lambda_-<\lambda_+<\infty,\qquad
 F(\lambda_-)<\tau<F(\lambda_+).
\]
Continuity gives a solution in $(\lambda_-,\lambda_+)$, while
\[
 \lambda_1<\lambda_2\ \Longrightarrow\ F(\lambda_1)<F(\lambda_2)
\]
gives uniqueness. Therefore
\[
 \forall\tau\in(3/12,7/12),\quad
 \exists!\lambda>0:\ F(\lambda)=\tau.
\]
Since $3/12<0.45<7/12$, the reported target has a unique multiplier.
\end{proof}
For the reported calibration counts, $\lambda\simeq3.0244780430884326$ gives the target edited marginal $0.45$. This final marginal follows uniform bin sampling and need not equal the auxiliary source prior $\rho$.

\begin{table}[!htbp]
\caption{TED-bin probabilities, calibrated source choices, and the counts used to determine them. Every bin has training probability $1/12$; counts describe the August 28 calibration scan.}
\label{tab:generator_ted_sampling}
\centering\small
\begin{tabular}{lrrr r}
\toprule
TED interval & Edited count $E_i$ & Random count $R_i$ & $q_i$ (edit) & $1-q_i$ (random)\\
\midrule
$[0,0.1)$   & 1,562,156 & 106,540 & 1 & 0\\
$[0.1,0.2)$ & 2,578,355 & 148,986 & 1 & 0\\
$[0.2,0.3)$ & 3,802,093 & 478,678 & 1 & 0\\
$[0.3,0.4)$ & 3,536,928 & 1,073,557 & 0.908795814 & 0.091204186\\
$[0.4,0.5)$ & 2,699,071 & 2,321,306 & 0.778598241 & 0.221401759\\
$[0.5,0.6)$ & 2,159,613 & 6,491,726 & 0.501534776 & 0.498465224\\
$[0.6,0.7)$ & 1,147,026 & 12,966,794 & 0.211071169 & 0.788928831\\
$[0.7,0.8)$ & 405,094 & 9,157,137 & 0 & 1\\
$[0.8,0.9)$ & 216,774 & 4,606,913 & 0 & 1\\
$[0.9,1)$   & 37,601 & 1,838,357 & 0 & 1\\
$[1,1.5)$   & 163,252 & 7,710,484 & 0 & 1\\
$[1.5,\infty)$ & 35,509 & 5,641,184 & 0 & 1\\
\bottomrule
\end{tabular}
\end{table}

\section{Verifier architecture and training}
\label{app:training}
\subsection{Inputs and regression features}
The interface supports up to ten variables, at most 512 formula tokens, and at least three finite observations. It samples up to 256 rows for the table encoder and 2,048 rows for regression summaries. The table branch standardizes $y$ by $(y-\operatorname{mean}(y))/(\operatorname{std}(y)+10^{-6})$. Regression summaries use raw values and their specified transforms.

For each $x_j$, the 12 target/input transform pairs are
\begin{align}
(\log y,\log x_j),\quad &(y,x_j),\ (y,x_j^2),\ (y,x_j^3),\
(y,\sqrt{x_j}),\ (y,\log x_j),\nonumber\\
&(y,x_j^{-1}),\ (y,x_j^{-2}),\ (y,e^{x_j}),\
(y,\sin x_j),\ (y,\cos x_j),\ (y,\tan x_j).
\end{align}
Write the $k$th pair as $(g_k(y),h_k(x_j))$. For each variable $j$, let $\mathcal I_{jk}$ contain the rows on which both transformed values are finite and within their domains; the log--log pair, for example, requires $x_{ij}>0$ and $y_i>0$. With $u_i=h_k(x_{ij})$ and $v_i=g_k(y_i)$, we compute the signed Pearson correlation
\[
r_{jk}=\frac{\sum_{i\in\mathcal I_{jk}}(u_i-\bar u)(v_i-\bar v)}
{\sqrt{\sum_{i\in\mathcal I_{jk}}(u_i-\bar u)^2}\sqrt{\sum_{i\in\mathcal I_{jk}}(v_i-\bar v)^2}},
\]
where the means are over $\mathcal I_{jk}$. This equals $\operatorname{sign}(\widehat\beta_{jk})\sqrt{R_{jk}^2}$ for a one-predictor least-squares regression with an intercept, where $\widehat\beta_{jk}$ is its slope and $R_{jk}^2$ its coefficient of determination. For numerical stability, each transformed vector is divided by its largest absolute value before centering, which leaves its correlation unchanged, and the result is clipped to $[-1,1]$. A relation is masked if fewer than three valid rows remain, either transformed vector has zero variance, or the correlation is undefined; unused variables are also masked.

Each valid $(j,k)$ relation produces a token. Let $e_{\mathrm{out}}(g_k)$, $e_{\mathrm{pair}}(k)$, and $e_{\mathrm{var}}(j)$ be learned 64-dimensional embeddings for the output form ($y$ or $\log y$), the relation identity, and the input variable. The 12 relation identities are distinct, including the two relations that use $\log x_j$. The descriptor and gated token are
\[
\begin{aligned}
q_{jk}&=[e_{\mathrm{out}}(g_k);e_{\mathrm{pair}}(k);e_{\mathrm{var}}(j);r_{jk}]\in\mathbb R^{193},\\
t_{jk}&=|r_{jk}|(W_Rq_{jk}+b_R)\in\mathbb R^{768}.
\end{aligned}
\]
The projection $W_R,b_R$ is shared across relations. The signed $r_{jk}$ enters the descriptor, while $|r_{jk}|$ reduces the magnitude of weak-correlation tokens. Masked tokens are excluded from the learned attention pooling. For the valid set $\mathcal V$, the pooled regression representation is
\[
\alpha_{jk}=\frac{\exp(w^\top t_{jk}+b)}{\sum_{(p,q)\in\mathcal V}\exp(w^\top t_{pq}+b)},\qquad
z_R=\sum_{(j,k)\in\mathcal V}\alpha_{jk}t_{jk}.
\]
Here $w,b$ are learned attention-score parameters. If no relation is valid, the pooled vector is zero.

The data encoder embeds sign/mantissa/exponent tokens into 64 dimensions, concatenates each padded row, and projects it to width 768. The formula encoder adds token, tree-address, and depth embeddings. Both use six Transformer layers, six attention heads, learned attention pooling, and output projection. SwiGLU hidden width is 2,048. Part of its value bias is tied to match the parameter count of the original four-times-width feed-forward block.

\subsection{Optimization}
For matched Stage~1 data/formula embeddings $(u_i,v_i)$, the loss is
\begin{equation}
\mathcal L_{\mathrm{align}}=-\frac{1}{2B}\sum_i
\left[\log\frac{\exp(u_i^\top v_i/t)}{\sum_j\exp(u_i^\top v_j/t)}
+\log\frac{\exp(v_i^\top u_i/t)}{\sum_j\exp(v_i^\top u_j/t)}\right],
\end{equation}
with temperature $t=1$ and raw, unnormalized dot products. The paired expression is the generating target. Stage~2 initializes both encoders, adds the regression branch and fusion head, and updates all parameters using Equation~\ref{eq:loss}. The regression head does not clip outputs.

\begin{table}[!htbp]
\caption{Completed verifier training configuration. Checkpoints were selected on synthetic validation, not benchmark test scores.}
\label{tab:training}
\centering
\small
\begin{tabular}{lll}
\toprule
Setting & Stage 1 & Stage 2\\
\midrule
Global batch size & 256 & 256\\
Updates & 55,000 & 100,000\\
Learning rate & $10^{-4}$ peak & $2{\times}10^{-5}$ to $2{\times}10^{-6}$\\
Schedule & Warmup--stable--decay & Warmup + cosine decay\\
Warmup updates & 2,000 & 2,000\\
Stable / decay updates & 48,000 / 5,000 & ---\\
Seed / training GPUs & 42 / 4 & 42 / 4\\
Selected checkpoint update & 54,500 & 86,600\\
\bottomrule
\end{tabular}
\end{table}
Stage~2 uses AdamW with $\beta=(0.9,0.95)$, weight decay 0.01, gradient clipping at 1.0, and microbatches of 32. Validation is evaluated every 100 updates. The selected Stage~2 checkpoint has archived validation objective 0.12612656.

The bins are $[0,0.1),\ldots,[0.9,1.0),[1.0,1.5),[1.5,\infty)$, sampled equally. Conditional probabilities of drawing an edited pair are
\[
(1,\ 1,\ 1,\ 0.9088,\ 0.7786,\ 0.5015,\ 0.2111,\ 0,\ 0,\ 0,\ 0,\ 0).
\]
Their mean is 0.45, with complementary random-pair probabilities. Validation draws a fixed source mixture with 32 examples per bin, totaling 384 examples. The reserved source/bin rows and final mixed validation batch are distinct.

\begin{figure}[!htbp]
\centering
\includegraphics[width=0.94\linewidth]{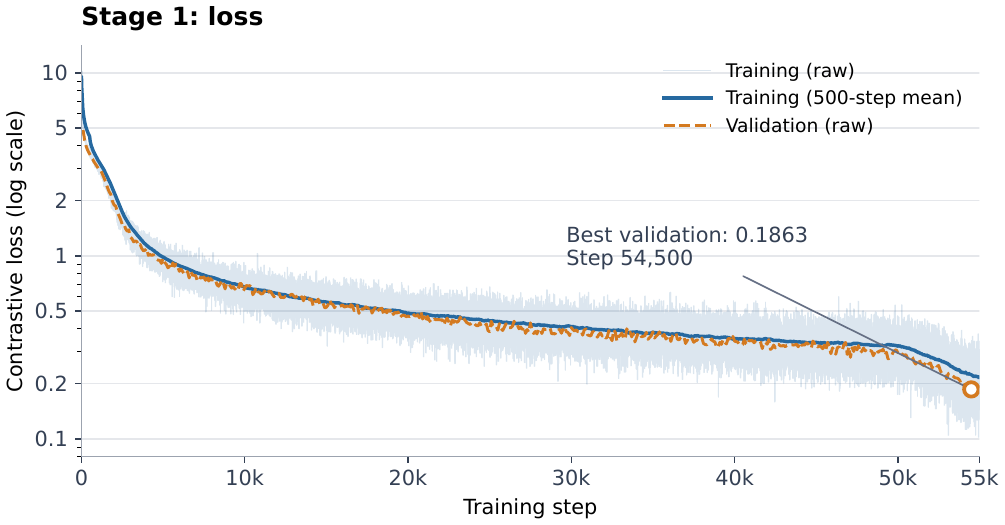}
\caption{Stage~1 training and validation loss.}
\label{fig:verifier_stage1_loss}
\end{figure}

\begin{figure}[!htbp]
\centering
\includegraphics[width=0.94\linewidth]{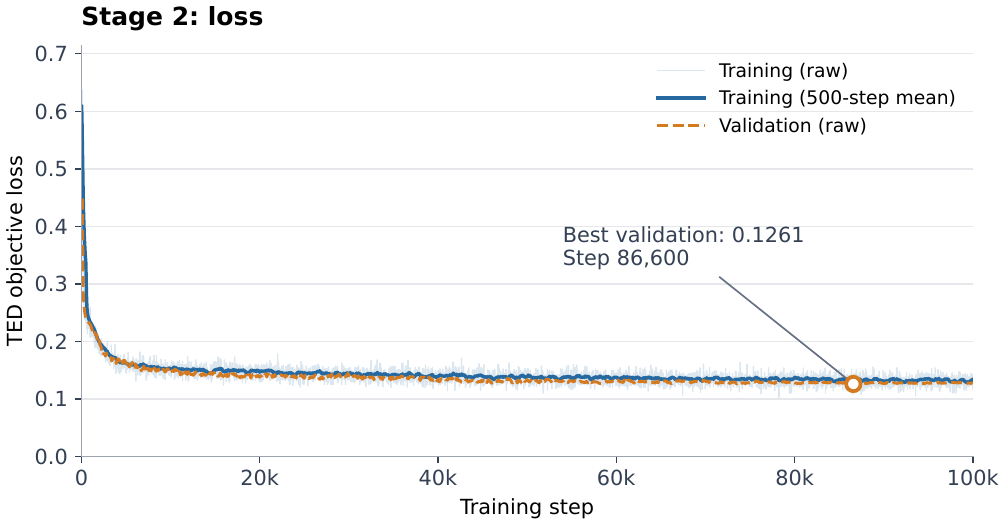}
\caption{Stage~2 training and validation loss.}
\label{fig:verifier_stage2_loss}
\end{figure}

\begin{figure}[!htbp]
\centering
\begin{minipage}[t]{0.49\linewidth}
\centering
\includegraphics[width=\linewidth]{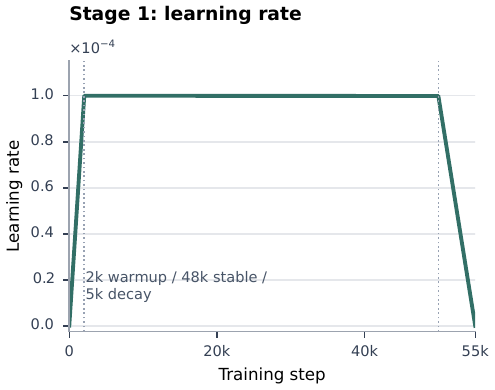}
\caption{Stage~1 learning-rate schedule.}
\label{fig:verifier_stage1_lr}
\end{minipage}\hfill
\begin{minipage}[t]{0.49\linewidth}
\centering
\includegraphics[width=\linewidth]{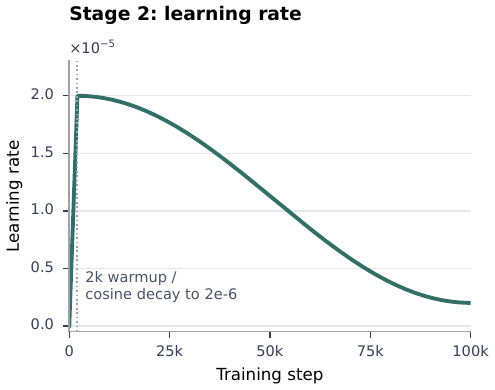}
\caption{Stage~2 learning-rate schedule.}
\label{fig:verifier_stage2_lr}
\end{minipage}
\end{figure}

\section{Search implementation and ablation controls}
\label{app:search}

\begingroup
\setlength{\intextsep}{4pt}
\begin{algorithm}[H]
    \footnotesize
    \caption{L2G-SR: symbolic search with learned structural feedback}
    \label{alg:search}
    \begin{algorithmic}[1]
    \Require Observations $\mathcal D$, metadata, frozen $\vpred$; proposal budget
    \State Split $\mathcal D$; initialize four islands
    \For{each round in the balanced M/T schedule}
      \State Select island and parent by validation MSE or predicted TED
      \State Analyst uses diagnostics and history; generator proposes four formulas
      \State For each proposal: validate, fit, simplify, refit, and score by MSE and $\vpred$
      \State Add valid formulas; deduplicate and prune by both scores
    \EndFor
    \State \Return simplest valid formula within the validation-MSE tolerance
\end{algorithmic}
\end{algorithm}
\endgroup

\begin{table}[!htbp]
\caption{Search settings shared by TED and matched numerical-only conditions.}
\centering
\begin{tabular}{ll}
\toprule
Setting & Value\\
\midrule
Backbone for both roles / temperature & Llama3.1-8B-Instruct / 0.8\\
Islands / capacity / M and T quotas & 4 / 64 / 32 each\\
Initialization / rounds / children & 40 / 90 / 4\\
M rounds / T rounds & 45 / 45\\
Tournament sizes, M / T & 3 / 2, with replacement\\
Credibility threshold / exploration cap & Validation NMSE $\leq1$ / 8\\
Validation fraction / maximum fit rows & 0.2 / 4,096\\
Independent constants / BFGS starts & At most 10 / 3\\
BFGS gradient tolerance & $10^{-5}$\\
Per-slot numerical budget & 30 seconds; 6,000 objective evaluations\\
Postprocessing passes & At most 16\\
Analyzer tool calls / interactions & At most 8 / 4 per round\\
Analyzer / generator input-token caps & 6,000 / 4,000\\
Analyzer / generator output-token caps & 2,200 / 1,800\\
\bottomrule
\end{tabular}
\end{table}

When subtraining exceeds 4,096 rows, fitting observations are sampled by target-value strata. Constants are unbounded real values; the fitting budget bounds computation, not coefficients. Mapped parent parameters provide warm starts. Linear-in-parameter formulas can use a scaled least-squares initialization. Supported symbolic derivatives supply gradients, with numerical differentiation as a fallback.

A ratio above $10^4$ between the 95th and 5th percentiles of $|y|$, or its maximum and median, activates transformed fitting. Positive targets use log residuals; otherwise the implementation uses $\operatorname{asinh}(y/s_y)$. To screen term removal before refitting, we set each fitted constant $\theta_j$ to zero in the current formula $f$, giving $f_{j\to0}$. We consider the contribution small only when, over all $n$ search observations,
\begin{equation}
\begin{aligned}
\Delta_i^{(j)} &= \frac{f_{j\to0}(x_i)-f(x_i)}{\max\{|y_i|,10^{-6}s_y\}},\\
\sqrt{\frac{1}{n}\sum_{i=1}^n\bigl(\Delta_i^{(j)}\bigr)^2} &\le 10^{-4},\qquad \max_i\bigl|\Delta_i^{(j)}\bigr|\le 10^{-3},
\end{aligned}
\label{eq:contribution}
\end{equation}
where $s_y=\max\{\operatorname{median}_i|y_i|,\,10^{-6}Q_{0.75}(|y|),\,\epsilon_{\mathrm{tiny}}\}$ is the robust target scale. Other proposed edits use the same normalized output change before refitting. After refitting, the proposal must satisfy, separately on fixed subtraining and validation,
\[
\mse_{\mathrm{new}}\leq
\mse_{\mathrm{old}}+10^{-3}\mse_{\mathrm{old}}+
10^{-12}\operatorname{Var}(y).
\]
These checks are untrimmed. Symbolic-equivalence proposals use stricter output-change checks. The verifier scores only the final accepted effective expression.

Let $m$ be the best validation MSE and $\sigma_y^2$ the target variance used by final selection. Eligible candidates satisfy
\[
\mse(f)\leq m+\max\{10^{-10}\max(\sigma_y^2,10^{-30}),10^{-4}m\}.
\]
Among them, selection minimizes complexity, parameter count, MSE, and then a stable identifier. Tournament ties are broken uniformly. Unavailable structural predictions trigger a random-valid-candidate fallback for the T tournament.

The no-TED control substitutes MSE in the compatibility score field used by comparisons and retention, and labels this substitution in the prompts. The schedule and tournament sizes remain unchanged. It performs no TED model loading or prediction; any offline true-distance computation cannot affect search decisions.

\section{Detailed completed results}
\label{app:results}
\subsection{Aggregation and reporting}
We aggregate completed equations within each domain/seed and then report
\[
\bar m=\frac13\sum_{s=1}^3m_s,\qquad
\operatorname{SD}(m)=\sqrt{\frac12\sum_{s=1}^3(m_s-\bar m)^2}.
\]
Accuracy dispersion is in percentage points. Entries taken from DE preserve its reported precision and aggregation. Our RESTART reproduction reports sample SD across its own three seeds.

The task configuration totals 240 equations. We report the actual evaluated domain counts.

The float64 evaluator discards $\lfloor0.05n\rfloor$ points per equation. Nonfinite predictions consume this allowance before finite points are ordered by relative error. Exceeding the allowance makes the equation invalid. The retained-set target mean is recomputed for the NMSE denominator, protected by the smallest positive float64 value. This differs from untrimmed online validation.

\begin{table}[!htbp]
\caption{Sample standard deviations for the RESTART and SR-Scientist means in Table~\ref{tab:main}. Accuracy standard deviations are in percentage points.}
\label{tab:baseline_sd}
\centering
\scriptsize
\setlength{\tabcolsep}{3pt}
\resizebox{\linewidth}{!}{%
\begin{tabular}{@{}l*{5}{cc}@{}}
\toprule
 & \multicolumn{2}{c}{Transform} & \multicolumn{2}{c}{Physics} & \multicolumn{2}{c}{Materials} & \multicolumn{2}{c}{Chemistry} & \multicolumn{2}{c}{Biology}\\
Method & NMSE SD & $\acc$ SD & NMSE SD & $\acc$ SD & NMSE SD & $\acc$ SD & NMSE SD & $\acc$ SD & NMSE SD & $\acc$ SD\\
\midrule
\multicolumn{11}{c}{\textbf{Llama3.1-8B-Instruct}}\\
RESTART & 7.06e-3 & 1.88 & 3.47e-3 & 4.73 & 6.91e-5 & 6.11 & 1.23e-6 & 3.21 & 1.91e-2 & 6.36\\
SR-Scientist & 1.94e-3 & 1.38 & 3.18e-3 & 1.31 & 1.49e-3 & 0.00 & 7.47e-5 & 2.78 & 4.53e-2 & 4.17\\
\midrule
\multicolumn{11}{c}{\textbf{Qwen3-4B-Instruct-2507}}\\
RESTART & 1.08e2 & 1.04 & 5.48e-2 & 2.62 & 1.14e-3 & 4.00 & 3.81e-2 & 8.49 & 2.61e0 & 2.41\\
\bottomrule
\end{tabular}%
}
\end{table}

Table~\ref{tab:llm_usage} compares our reproduced RESTART and SR-Scientist runs with \ours{} on successful LLM usage. We count distinct successful API responses, including responses whose generated expressions are later rejected, and exclude failed requests. Input and output tokens are the provider-reported prompt and completion tokens for those responses. Each mean uses the same 480 completed problem runs per method; duplicate response IDs are removed from the RESTART and \ours{} journals, while SR-Scientist uses its successful-response counters.

\begin{table}[!htbp]
\caption{Successful LLM calls and tokens per problem for the Llama3.1-8B-Instruct runs shared by the three methods.}
\label{tab:llm_usage}
\centering
\small
\setlength{\tabcolsep}{5pt}
\begin{tabular}{@{}lrrr@{}}
\toprule
Method & \shortstack{Successful calls\\per problem} & \shortstack{Input tokens\\per problem} & \shortstack{Output tokens\\per problem}\\
\midrule
RESTART (reproduced) & 422.7 & 715,609 & 103,528\\
SR-Scientist (reproduced) & 519.2 & 1,517,007 & 63,266\\
\ours{} & 189.2 & 847,822 & 48,130\\
\bottomrule
\end{tabular}
\end{table}

\subsection{In-distribution Oscillation results}
\label{app:oscillation_id}
Table~\ref{tab:oscillation_id} reports the in-distribution results on the two real-world Oscillation problems.
\begin{table}[!htbp]
\centering
\centering
\scriptsize
\setlength{\tabcolsep}{1.5pt}
\renewcommand{\arraystretch}{1.15}
\newcommand{\oscmeanstd}[2]{\begin{tabular}[t]{@{}c@{}}\strut#1\\{\tiny\strut\textpm#2}\end{tabular}}
\def\osccolwidth{\dimexpr0.2\linewidth-1.6\tabcolsep\relax}
\begin{tabular}{@{}p{\osccolwidth}*{4}{>{\centering\arraybackslash}p{\osccolwidth}}@{}}
\toprule
 & \multicolumn{2}{c}{\textbf{Oscillation 1}} & \multicolumn{2}{c}{\textbf{Oscillation 2}}\\
\cmidrule(lr){2-3}\cmidrule(lr){4-5}
\textbf{Method} & NMSE $\downarrow$ & $\acc^{\mathrm{pt}}$ $\uparrow$ & NMSE $\downarrow$ & $\acc^{\mathrm{pt}}$ $\uparrow$\\
\midrule
\rowcolor{gray!10}
\multicolumn{5}{c}{\textbf{Llama3.1-8B-Instruct}}\\
LLMDirect & \oscmeanstd{6.38e-7}{4.34e-7} & \oscmeanstd{\textbf{\boldmath 100.00}}{0.00} & \oscmeanstd{\underline{2.31e-5}}{1.89e-5} & \oscmeanstd{\underline{88.79}}{9.37}\\
LLM-SR & \oscmeanstd{1.21e-5}{1.67e-5} & \oscmeanstd{94.31}{9.86} & \oscmeanstd{4.51e-4}{7.17e-4} & \oscmeanstd{63.97}{34.95}\\
LaSR & \oscmeanstd{9.61e-4}{3.66e-4} & \oscmeanstd{16.26}{3.35} & \oscmeanstd{8.36e-4}{3.27e-4} & \oscmeanstd{31.37}{9.05}\\
SGA & \oscmeanstd{2.23e-4}{3.76e-4} & \oscmeanstd{68.78}{44.14} & \oscmeanstd{4.74e-3}{2.15e-3} & \oscmeanstd{7.96}{1.71}\\
RESTART & \oscmeanstd{4.61e-6}{3.27e-6} & \oscmeanstd{\underline{99.70}}{0.52} & \oscmeanstd{4.11e-3}{1.66e-3} & \oscmeanstd{11.14}{1.28}\\
SR-Scientist & \oscmeanstd{\underline{8.80e-10}}{7.63e-10} & \oscmeanstd{\textbf{\boldmath 100.00}}{0.00} & \oscmeanstd{4.21e-4}{7.30e-4} & \oscmeanstd{74.55}{44.08}\\
\rowcolor{blue!4}
\ours & \oscmeanstd{\textbf{\boldmath 6.22e-30}}{7.03e-30} & \oscmeanstd{\textbf{\boldmath 100.00}}{0.00} & \oscmeanstd{\textbf{\boldmath 3.61e-7}}{5.21e-7} & \oscmeanstd{\textbf{\boldmath 100.00}}{0.00}\\
\bottomrule
\end{tabular}
\caption{ID results on Oscillation 1 and 2.}
\label{tab:oscillation_id}

\end{table}

\subsection{Out-of-distribution evaluation}
\label{app:ood}
We evaluate how the selected expressions generalize to shifted inputs.

\begin{table}[!htbp]
\caption{OOD evaluation on LSR-Synth. Published baseline rows are retained as reported. Our reproduced rows show mean $\pm$ standard deviation across three seeds.}
\label{tab:ood}
\centering
\scriptsize
\setlength{\tabcolsep}{2pt}
\renewcommand{\arraystretch}{1.15}
\newcommand{\oodmeanstd}[2]{\begin{tabular}[t]{@{}c@{}}\strut#1\\{\tiny\strut\textpm#2}\end{tabular}}
\resizebox{\linewidth}{!}{%
\begin{tabular}{@{}l*{4}{cc}@{}}
\toprule
 & \multicolumn{2}{c}{\textbf{Physics}} & \multicolumn{2}{c}{\textbf{Materials}} & \multicolumn{2}{c}{\textbf{Chemistry}} & \multicolumn{2}{c}{\textbf{Biology}}\\
\cmidrule(lr){2-3}\cmidrule(lr){4-5}\cmidrule(lr){6-7}\cmidrule(lr){8-9}
\textbf{Method} & NMSE $\downarrow$ & $\acc$ $\uparrow$ & NMSE $\downarrow$ & $\acc$ $\uparrow$ & NMSE $\downarrow$ & $\acc$ $\uparrow$ & NMSE $\downarrow$ & $\acc$ $\uparrow$\\
\midrule
\rowcolor{gray!10}
\multicolumn{9}{c}{\textbf{Llama3.1-8B-Instruct}}\\
A-SR & 6.55e3 & \underline{25.00} & \underline{6.90e-4} & \textbf{\boldmath 100.00} & 2.06e0 & 25.00 & \underline{1.08e0} & \underline{45.83}\\
RESTART & \oodmeanstd{2.13e2}{1.89e2} & \oodmeanstd{10.08}{4.84} & \oodmeanstd{6.65e-3}{5.61e-3} & \oodmeanstd{\underline{90.28}}{4.81} & \oodmeanstd{\underline{7.79e-1}}{2.44e-1} & \oodmeanstd{\underline{41.90}}{4.36} & \oodmeanstd{2.10e1}{2.26e1} & \oodmeanstd{18.84}{6.64}\\
SR-Scientist & \oodmeanstd{\underline{7.70e1}}{7.36e1} & \oodmeanstd{21.71}{2.69} & \oodmeanstd{9.09e-2}{7.82e-2} & \oodmeanstd{75.00}{11.02} & \oodmeanstd{2.33e0}{2.53e0} & \oodmeanstd{35.24}{4.36} & \oodmeanstd{7.49e3}{1.30e4} & \oodmeanstd{23.19}{2.51}\\
\rowcolor{blue!4}
\ours & \oodmeanstd{\textbf{\boldmath 2.13e-3}}{1.42e-3} & \oodmeanstd{\textbf{\boldmath 90.70}}{2.33} & \oodmeanstd{\textbf{\boldmath 2.27e-5}}{2.21e-5} & \oodmeanstd{\textbf{\boldmath 100.00}}{0.00} & \oodmeanstd{\textbf{\boldmath 1.92e-1}}{8.96e-2} & \oodmeanstd{\textbf{\boldmath 69.22}}{7.27} & \oodmeanstd{\textbf{\boldmath 1.99e-1}}{1.56e-1} & \oodmeanstd{\textbf{\boldmath 66.67}}{5.02}\\
\midrule
\rowcolor{gray!10}
\multicolumn{9}{c}{\textbf{Qwen3-4B-Instruct-2507}}\\
LLMDirect & 3.43e4 & 6.82 & 1.94e-1 & 72.00 & 5.98e6 & 0.00 & 1.18e2 & 4.17\\
LLM-SR & 8.17e4 & 9.09 & 2.75e-1 & 72.00 & 6.57e2 & 11.11 & 2.24e1 & 12.50\\
LaSR & 1.08e4 & 9.09 & 3.25e-1 & 40.00 & 4.49e2 & 2.78 & 9.66e2 & 4.17\\
SGA & 9.92e5 & 6.82 & 2.91e-1 & 40.00 & 1.55e3 & 2.78 & 1.83e5 & 4.17\\
DE & 1.97e3 & 15.91 & 4.85e-2 & 80.00 & 1.09e1 & 11.11 & 1.14e1 & 12.50\\
A-SR & \underline{3.17e2} & \underline{18.18} & \underline{3.64e-3} & \underline{96.00} & \underline{8.49e-1} & \underline{19.44} & \underline{7.41e0} & \underline{16.67}\\
\rowcolor{blue!4}
\ours & \oodmeanstd{\textbf{\boldmath 1.06e-2}}{1.37e-2} & \oodmeanstd{\textbf{\boldmath 89.15}}{1.34} & \oodmeanstd{\textbf{\boldmath 1.51e-5}}{1.89e-5} & \oodmeanstd{\textbf{\boldmath 100.00}}{0.00} & \oodmeanstd{\textbf{\boldmath 1.41e-1}}{4.11e-2} & \oodmeanstd{\textbf{\boldmath 63.81}}{1.65} & \oodmeanstd{\textbf{\boldmath 1.08e-1}}{3.99e-2} & \oodmeanstd{\textbf{\boldmath 59.42}}{2.51}\\
\bottomrule
\end{tabular}%
}
\end{table}

Under this protocol, \ours{} has the best displayed NMSE and accuracy in every domain for both backbones.

\subsection{Oscillation out-of-distribution results}
\label{app:oscillation_ood}
Table~\ref{tab:oscillation_ood} reports the OOD results corresponding to the ID comparison in Table~\ref{tab:oscillation_id}. We evaluate the same selected formulas without refitting on the 10,000 observations in each official OOD split. Metrics are scored in float64 after removing the 500 observations with the largest relative errors. Pointwise $\acc^{\mathrm{pt}}$ is the percentage of the 9,500 retained observations whose relative error is at most 0.01. All entries are means and sample standard deviations across three seeds.

\begin{table}[!htbp]
\caption{OOD results on Oscillation 1 and 2. NMSE ($\downarrow$) and pointwise $\acc^{\mathrm{pt}}$ ($\uparrow$, \%) are computed after 5\% relative-error trimming. Entries report mean $\pm$ sample standard deviation over three seeds. Bold and underlined means indicate the best and second-best displayed values, respectively.}
\label{tab:oscillation_ood}
\centering
\scriptsize
\setlength{\tabcolsep}{1.5pt}
\renewcommand{\arraystretch}{1.15}
\newcommand{\oscmeanstd}[2]{\begin{tabular}[t]{@{}c@{}}\strut#1\\{\tiny\strut\textpm#2}\end{tabular}}
\def\osccolwidth{\dimexpr0.2\linewidth-1.6\tabcolsep\relax}
\begin{tabular}{@{}p{\osccolwidth}*{4}{>{\centering\arraybackslash}p{\osccolwidth}}@{}}
\toprule
 & \multicolumn{2}{c}{\textbf{Oscillation 1}} & \multicolumn{2}{c}{\textbf{Oscillation 2}}\\
\cmidrule(lr){2-3}\cmidrule(lr){4-5}
\textbf{Method} & NMSE $\downarrow$ & $\acc^{\mathrm{pt}}$ $\uparrow$ & NMSE $\downarrow$ & $\acc^{\mathrm{pt}}$ $\uparrow$\\
\midrule
\rowcolor{gray!10}
\multicolumn{5}{c}{\textbf{Llama3.1-8B-Instruct}}\\
LLMDirect & \oscmeanstd{1.03e-2}{0.88e-2} & \oscmeanstd{45.06}{18.89} & \oscmeanstd{\underline{3.83e-3}}{3.01e-3} & \oscmeanstd{52.51}{14.82}\\
LLM-SR & \oscmeanstd{3.95e-2}{4.49e-2} & \oscmeanstd{23.52}{33.64} & \oscmeanstd{6.28e-3}{0.76e-3} & \oscmeanstd{29.80}{18.29}\\
LaSR & \oscmeanstd{1.98e-1}{2.44e-1} & \oscmeanstd{4.60}{4.91} & \oscmeanstd{1.75e-2}{1.56e-2} & \oscmeanstd{13.99}{5.65}\\
SGA & \oscmeanstd{3.19e-1}{2.78e-1} & \oscmeanstd{37.22}{50.88} & \oscmeanstd{2.30e-2}{1.46e-2} & \oscmeanstd{7.12}{3.86}\\
RESTART & \oscmeanstd{8.51e-3}{6.89e-3} & \oscmeanstd{15.10}{19.34} & \oscmeanstd{2.05e20}{3.55e20} & \oscmeanstd{1.15}{1.33}\\
SR-Scientist & \oscmeanstd{\underline{3.88e-4}}{4.14e-4} & \oscmeanstd{\underline{79.11}}{15.35} & \oscmeanstd{2.13e-2}{3.68e-2} & \oscmeanstd{\underline{65.70}}{55.48}\\
\rowcolor{blue!4}
\ours & \oscmeanstd{\textbf{\boldmath 3.11e-29}}{3.48e-29} & \oscmeanstd{\textbf{\boldmath 100.00}}{0.00} & \oscmeanstd{\textbf{\boldmath 5.95e-5}}{5.55e-5} & \oscmeanstd{\textbf{\boldmath 91.24}}{9.41}\\
\bottomrule
\end{tabular}
\end{table}

\ours{} has the lowest mean OOD NMSE and highest mean pointwise accuracy on both problems. On Oscillation 2, its pointwise OOD accuracy is $91.24\pm9.41\%$.

\subsection{Baseline configurations and RESTART reproduction}
\label{app:baselines}
For LLMDirect, LLM-SR, LaSR, SGA, and DE, Table~\ref{tab:main} uses the Llama3.1-8B-Instruct results reported by DE \citep{de}. We preserve DE's reported configurations rather than rerunning these methods.

\noindent\textbf{Reported baseline parameters.}
The imported runs share the following settings:
\begin{itemize}
    \item \textbf{Backbone:} Llama3.1-8B-Instruct.
    \item \textbf{Generation temperature:} 0.8.
    \item \textbf{Maximum output length:} 8,192 tokens.
    \item \textbf{Inference system:} vLLM on 4 NVIDIA A800-SXM4-80GB GPUs.
\end{itemize}

\noindent\textbf{LLMDirect.}
\begin{itemize}
    \item \textbf{Hypotheses per prompt:} 5.
    \item \textbf{Total LLM-generated candidates per problem:} 5.
    \item \textbf{Execution timeout per hypothesis:} 30 seconds.
    \item \textbf{Constant optimizer:} SciPy BFGS.
\end{itemize}

\noindent\textbf{LLM-SR.}
\begin{itemize}
    \item \textbf{Equation programs per prompt:} 4.
    \item \textbf{Parallel evaluators:} 4.
    \item \textbf{Islands:} 10.
    \item \textbf{In-context parents per prompt:} 4.
    \item \textbf{Maximum parameters per equation skeleton:} 10.
    \item \textbf{Constant optimizer:} SciPy BFGS.
    \item \textbf{Total LLM-generated candidates per problem:} 1,000.
\end{itemize}

\Needspace{4\baselineskip}
\noindent\textbf{LaSR.}
\begin{itemize}
    \item \textbf{Iterations:} 25.
    \item \textbf{Evolutionary cycles per iteration:} 550.
    \item \textbf{Populations:} 10.
    \item \textbf{Population size:} 33.
    \item \textbf{Operators:} $+,-,\times,\div,\wedge,\exp,\log,\sqrt{\cdot},\sin,\cos,\tan,$ and $\cosh$.
    \item \textbf{LLM-operator weights:} \texttt{llm\_mutate}=0.005, \texttt{llm\_crossover}=0.005, and \texttt{llm\_gen\_random}=0.005.
    \item \textbf{Total LLM-generated candidates per problem:} 1,000.
    \item \textbf{Random non-LLM mutations per problem:} $>453{,}000$.
    \item \textbf{Other PySR parameters:} defaults of the evaluated implementation; DE reports no further values.
\end{itemize}

\noindent\textbf{SGA.}
\begin{itemize}
    \item \textbf{Feedback objective:} MSE.
    \item \textbf{Continuous-parameter optimizer:} \texttt{torch.optim.Adam}.
    \item \textbf{Differentiable model:} PyTorch \texttt{torch.nn.Module}.
    \item \textbf{Total LLM-generated candidates per problem:} 1,000.
    \item \textbf{Adam hyperparameters:} not reported by DE.
\end{itemize}

\noindent\textbf{DE.}
\begin{itemize}
    \item \textbf{Islands:} 4; \textbf{capacity per island:} 400; \textbf{reset interval:} 50 generations.
    \item \textbf{Generations:} 100; \textbf{offspring per generation:} 4; \textbf{maximum LLM-generated candidates per problem:} 400.
    \item \textbf{Selection:} fitness-proportional Boltzmann sampling; \textbf{initial temperature:} 0.5; \textbf{cooling rate:} 0.95; \textbf{schedule:} exponential decay; \textbf{stagnation-recovery temperature:} 2.0.
    \item \textbf{Parents per prompt:} 1 for non-crossover operators and 2 for crossover.
    \item \textbf{Maximum refinement rounds per sample:} 4.
    \item \textbf{Reflective-memory update interval:} 12 generations; \textbf{breakthrough threshold:} 0.1.
    \item \textbf{Reflection context:} 3 elite examples and 3 failed examples.
    \item \textbf{Stagnation window:} 8 generations; \textbf{distilled insights per update:} 3.
\end{itemize}

The RESTART row in Table~\ref{tab:main} uses our reproduction, rather than the original paper's reported scores. It evaluates all 240 equations with three seeds (720 valid task results).

\begin{table}[!htbp]
\caption{Settings for our RESTART reproduction.}
\label{tab:restart_config}
\centering
\small
\setlength{\tabcolsep}{4pt}
\begin{tabular}{@{}ll@{}}
\toprule
Setting & Value\\
\midrule
Backbone / temperature & Llama3.1-8B-Instruct / 0.8\\
Training / internal-validation fractions & 0.8 / 0.2\\
Islands / generations / slots per generation & 10 / 100 / 4\\
LLM-generated candidate budget & 400 slots per problem\\
Refinement and memory & E2E-assisted refinement / concept library\\
Maximum fit rows / constants & 4,096 / 10\\
BFGS starts / initial constant & 3 / 1.0\\
BFGS iterations per parameter / gradient tolerance & 200 / $10^{-5}$\\
Per-slot fitting timeout / objective evaluations & 30 seconds / 6,000\\
Parameter domain & Unbounded real values\\
Final selection & Minimum internal-validation MSE\\
Additional computation & Numerical/E2E initialization and E2E calls\\
\bottomrule
\end{tabular}
\end{table}

The reproduction adapts the released island, refinement, and concept-library code to the same numerical-fitting interface used by \ours{}. The 10 islands, E2E-based refinement, and concept library retain the method's native roles. Numerical/E2E initialization uses 0 LLM proposal slots, and E2E calls add computation beyond the 400-slot budget.

Generated Python candidates are converted to the shared restricted algebra without executing generated code. The converter preserves the generated formula structure. Fitting uses the shared BFGS backend and budgets; our post-fit expression-pruning search is not applied to RESTART candidates. Final selection uses minimum internal-validation MSE. The official test arrays are scored only after search completion, without refitting.

\begin{table}[!htbp]
\caption{Per-problem search-budget comparison. Candidate or proposal counts follow the accounting used by DE and our implementations; the final column records method-specific computation outside those counts.}
\label{tab:budget_comparison}
\centering
\footnotesize
\setlength{\tabcolsep}{3pt}
\begin{tabular}{@{}>{\raggedright\arraybackslash}p{0.14\linewidth}>{\raggedright\arraybackslash}p{0.22\linewidth}>{\raggedright\arraybackslash}p{0.56\linewidth}@{}}
\toprule
Method & Counted LLM search budget & Additional search or optimization\\
\midrule
LLMDirect & 5 generated candidates & BFGS constant refinement for each hypothesis.\\
LLM-SR & 1,000 generated candidates & Island evolution and BFGS constant refinement; these operations act on the counted candidates.\\
LaSR & 1,000 LLM-generated candidates & More than 453,000 random non-LLM mutations, in addition to PySR fitting and selection.\\
SGA & 1,000 generated candidates & Adam-based inner-loop optimization of continuous parameters.\\
DE & At most 400 generated candidates ($100\times4$) & Diagnostic tools, BFGS constant fitting, and reflective-memory updates are outside the candidate count.\\
RESTART (ours) & 400 proposal slots ($100\times4$) & Numerical/E2E initialization and E2E calls do not consume proposal slots; constant fitting follows the shared BFGS limits.\\
\ours{} & 400 charged proposal slots ($40+90\times4$) & \\
\bottomrule
\end{tabular}
\end{table}

LLMDirect is a Best-of-5 baseline. DE, our RESTART reproduction, and \ours{} use 40\% of the 1,000-candidate LLM budget assigned to the iterative LLM-SR, LaSR, and SGA baselines. LaSR additionally evaluates more than 453 non-LLM mutations per counted LLM candidate. RESTART and \ours{} match at 400 nominal proposal slots.

\subsection{Verifier ablation details}
\label{app:representation_ablation}
Each variant in Table~\ref{tab:representation_ablation} uses training seed 42, Stage~2 batch size 256, and 100,000 updates. Data-pool snapshots, the 12-bin sampler, the loss in Equation~\ref{eq:loss}, validation composition, and optimization settings match the full verifier. The fixed validation set contains 173 edited and 211 random pairs, with 32 pairs per TED bin. Among its 384 pairs, 327 have $\dstr\leq1$; the near-target MAE and RMSE use only those 327 pairs. Global MAE uses all pairs.

Without tree address, the formula encoder still receives prefix tokens, but its depth/address embedding contribution is zero in both training stages. The original position parameters remain allocated, giving the same total parameter count as the full model, although the zero-gated parameters have no effect. Formula only reuses the full model's Stage~1 formula encoder, retains its tree-address inputs, and trains only that encoder and the fusion head. Its numerical encoders are absent, so its parameter count is smaller. Without the correlation branch, the data and formula encoders are initialized from the same full-model Stage~1 checkpoint, but the signed-correlation encoder is absent and its fusion slot is fixed to zero. One-stage retains all full-model inputs and parameters, uses no Stage~1 checkpoint, and trains from random initialization for the same 100,000 updates under the Stage~2 objective.

We select all reported checkpoints by the same validation objective.

\end{document}